\DeclareMathOperator*{\argmin}{arg\,min}
\newcommand{\bfsection}[1]{\vspace*{0.1cm}\noindent\textbf{#1}}
\newtheorem{lemma}{Lemma}
\newtheorem{defi}{Definition}
\def\eg{\emph{e.g. }}
\def\ie{\emph{i.e. }}
\def\vs{\emph{vs. }}
\def\wrt{\emph{w.r.t. }}
\def\etal{\emph{et al. }}
\begin{document}
\title{Deep Loss Convexification for \\Learning Iterative Models}

\author{Ziming Zhang, Yuping Shao, Yiqing Zhang, Fangzhou Lin, Haichong Zhang, and Elke Rundensteiner
\IEEEcompsocitemizethanks{\IEEEcompsocthanksitem All the authors are with Worcester Polytechnic Institute, Worcester, MA 01609. Dr. Ziming Zhang and Yuping Shao are from Electrical \& Computer Engineering, Dr. Elke Rundensteiner and Yiqing Zhang from Data Science, Dr. Haichong Zhang and Fangzhou Lin, from Robotics Engineering.
\newline Email: \{zzhang15, yshao2, yzhang37, flin2, hzhang10, rundenst\}@wpi.edu
}
}

\markboth{IEEE Transaction on Pattern Analysis and Machine Intelligence}{Zhang \etal}

\IEEEtitleabstractindextext{%
\begin{abstract}
    Iterative methods such as iterative closest point (ICP) for point cloud registration often suffer from bad local optimality (\eg saddle points), due to the nature of nonconvex optimization. To address this fundamental challenge, in this paper we propose learning to form the loss landscape of a deep iterative method \wrt\ {\em predictions at test time} into a {\em convex-like} shape locally around each ground truth given data, namely {\em Deep Loss Convexification (DLC)}, thanks to the overparametrization in neural networks. To this end, we formulate our learning objective based on adversarial training by manipulating the ground-truth predictions, rather than input data. In particular, we propose using star-convexity, a family of structured nonconvex functions that are unimodal on all lines that pass through a global minimizer, as our geometric constraint for reshaping loss landscapes, leading to (1) extra novel hinge losses appended to the original loss and (2) near-optimal predictions. We demonstrate the state-of-the-art performance using DLC with existing network architectures for the tasks of training recurrent neural networks (RNNs), 3D point cloud registration, and multimodel image alignment. 
\end{abstract}

\begin{IEEEkeywords}
convexification, deep learning, iterative models, 3D point cloud registration, multimodel image alignment
\end{IEEEkeywords}}

\maketitle

\section{Introduction}\label{sec:introduction}

\IEEEPARstart{W}{e} often formulate a (deep) learning task as follows:
\begin{align}\label{eqn:learning}
    \min_{\theta\in\Theta}\sum_i\ell(f(x_i;\theta), \omega_i^*), 
\end{align}
where $\{(x_i,\omega_i^*)\}\subseteq\mathcal{X}\times\Omega$ denotes the $i$-th training sample with data $x_i$ and ground-truth label $\omega_i^*$, $f:\mathcal{X}\times\Theta\rightarrow\Omega$ denotes a mapping function, \eg neural network, parametrized by $\theta$ in the feasible space $\Theta$, and $\ell:\Omega\times\Omega\rightarrow\mathbb{R}$ denotes a loss function such as cross-entropy. For simplicity, here we take the regularization on $\theta$ as being embedded in $\Theta$.

At test time, with a learned $\theta^*$, we still try to minimize the loss for each test datapoint $\hat{x}$ by predicting its label $\hat{\omega}^*$ as 
\begin{align}\label{eqn:y}
    \hat{\omega}^* = \argmin_{\omega\in\Omega} \ell(f(\hat{x};\theta^*), \omega).
\end{align}

\bfsection{Iterative Methods at Test Time.}
An iterative method often refers to a specific implementation that proceeds in discrete steps and operates at each step on the result from the previous step. For the tasks such as classification with the cross-entropy loss, there exists a close-form solution, \ie $\hat{\omega}^* = f(\hat{x};\theta^*)$. However, there also may not exist close-form solutions for many other tasks, \eg (rigid) point cloud registration \cite{huang2021comprehensive} whose goal is to predict a rigid motion that aligns and fits one point cloud into a reference coordinate frame. Fig.~\ref{fig:motivation} illustrates an iterative method (\eg iterative closest point (ICP) \cite{4767965} and PRNet \cite{wang2019prnet}) for 3D point cloud registration which defines a function that maps a pair of point clouds and the previous affine transformation $M_{t-1}$ into the current transformation $M_t$ as output to minimize the registration loss. In fact, this procedure can be formulated using fixed-point iteration as follows based on Eq. \ref{eqn:y}:
\begin{align}\label{eqn:fixed-point}
    & \hat{\omega}^* = \hat{\omega}_T, \hat{\omega}_t = g(\hat{x}, \hat{\omega}_{t-1}; \theta^*), 
    t\in[T],
\end{align}
where $g:\mathcal{X}\times\Omega\times\Theta\rightarrow\Omega$ denotes another mapping function for fixed-point iteration by minimizing Eq. \ref{eqn:y}. One possible instantiation of $g$ is using gradient descent (GD). 
Note that the close-form solutions can be taken as special cases of $g$.

\begin{figure}[t]
  \centering
   \includegraphics[width=.8\linewidth]{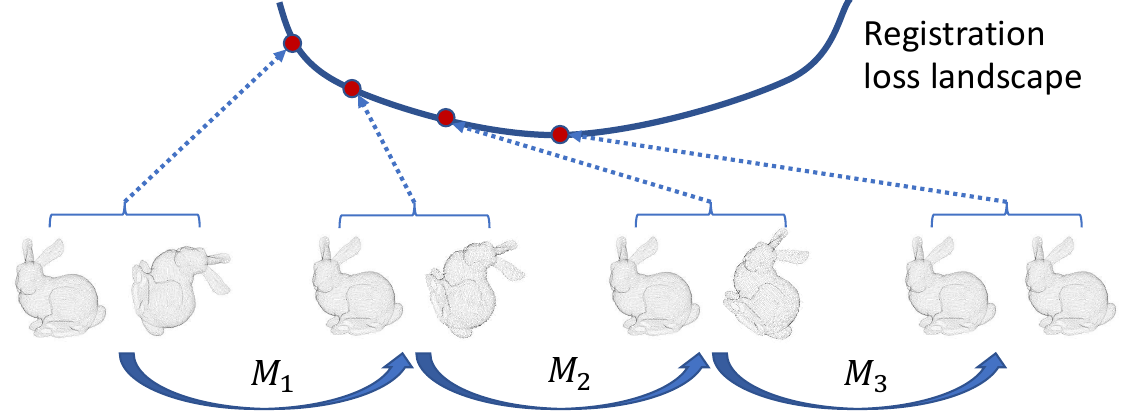}
   \caption{Illustration of iterative point cloud registration with the expectation of loss decrease on the landscape, where each $M$ denotes an affine transformation.}
   \label{fig:motivation}
   \vspace{-5mm}
\end{figure}

\bfsection{Convergence in Iterative Methods.}
Due to the nature of high nonconvexity in deep learning, the fixed-point iterations in Eq.~\ref{eqn:fixed-point} have no guarantee to converge, in general. Even if $g$ is a contraction mapping function \cite{smart1980fixed}, there is still no guarantee that the iterations will converge to the optimal solution (but often local optimality such as saddle points). Such convergence issues significantly limit the precision of iterative methods in applications. 

There are many works that address the problem of bad local optimality (see recent surveys in \cite{sun2019survey, danilova2020recent}), among which convex relaxation, a technique to relax a nonconvex problem into a convex one for global optimization, has been widely studied (\eg \cite{candes2010power}). In \cite{jain2017non}, relaxation may fail to find the solutions to the original problems because the modifications may change the problem drastically. However, if the problem possesses a certain nice {\em structure}, then under careful relaxation, the resultant relaxation will match with the one for the original problem. An example is the low-rank assumption in matrix completion, where convex relaxation can recover the exact matrices (\ie global solutions) with high probability \cite{candes2012exact, candes2010power}. Though (prior) structural information about the problems often has a significant role in nonconvex optimization, it has rarely been explored systematically in iterative methods for finding (near) optimal solutions. 

\bfsection{Convex-like Loss Landscapes \wrt\ Predictions at Test Time.}
As illustrated in Fig. \ref{fig:motivation}, constructing convex-like loss landscapes \wrt\ predictions, \ie $\omega$ in Eq. \ref{eqn:y}, at test time seems a good choice to facilitate the convergence to the optimal solution. This insight is significantly different from the literature in optimization for deep learning, where the loss landscapes \wrt\ the network weights, \ie $\theta$ in Eq. \ref{eqn:learning}, are considered. For instance, recently it has been observed that empirically the loss landscape of neural networks enjoys nice one-point convexity (a type of convex-like geometric structures centered at a single point on the loss landscape) properties locally \cite{kleinberg2018alternative}. Similarly, Li \etal \cite{li2018visualizing} observe that ``when networks become sufficiently deep, neural loss landscapes quickly transition from being nearly convex to being highly chaotic. This transition from convex to chaotic behavior coincides with a dramatic drop in generalization error, and ultimately to a lack of trainability.'' Zhou \etal \cite{zhou2018sgd} also have shown that stochastic gradient descent (SGD) will converge to the global minimum in deep learning if the assumption of star-convexity (another type of convex-like geometric structures with unimodality on all lines that pass through a global minimizer) in the loss landscapes holds. Furthermore, it has been proved in \cite{hinder2020near} that these convex-like shapes guarantee GD to converge to local minima. Though the insights on loss landscapes are different, such works highly motivate us to learn convex-like loss landscapes \wrt\ predictions at test time for iterative methods.

\bfsection{Our Approach: Deep Loss Convexification.}
Deep neural networks are often over-parametrized \cite{neyshabur2018towards}, \ie the number of parameters is more than sufficiently large to fit the training data. This insight motivates us to think about reshaping the loss landscapes of deep iterative methods as convex relaxation to search for (locally) optimal solutions. Our basic idea is to learn to form the loss landscape of an iterative method into a convex-like shape locally and approximately around each ground truth given data so that at test/inference time the iterative method will be able to converge to a {\em near-optimal} solution with some guarantee (\eg \cite{hinder2020near}). We call this procedure Deep Loss Convexification (DLC), though the loss landscape is not necessary to be strictly convex. 

In particular, we utilize star-convexity \cite{nesterov2006cubic} in our instantiation as the geometric constraint for shaping the loss landscapes, leading to extra hinge losses that are added to the original loss function. Recently star-convexity in nonconvex optimization has been attracting more and more attention \cite{lee2016optimizing, pmlr-v125-hinder20a, gower2021sgd, kuruzov2021sequential} because of its capability of finding near-optimal solutions based on GD with theoretical guarantees. Star-convex functions refer to a particular class of (typically) nonconvex functions whose global optimum is visible from every point in a downhill direction (see the formal definition in Sec. \ref{sec:method:preliminary}). From this view, convexity is a special case of star-convexity. In the literature, however, most of the works focus on optimizing and analyzing star-convex functions, while learning such functions like ours is hardly explored. At test time, the nice convergence properties of star-convexity will help find provably near-optimal solutions for the tasks.

\bfsection{Our Contributions.}
We summarize our contributions as follows:
\begin{itemize}[nosep, leftmargin=*]
    \item We propose a general framework, namely Deep Loss Convexification, to convexify the loss landscapes of iterative methods to facilitate the convergence at test time.

    \item We propose a specific learning approach based on star-convexity as the geometric constraint to mitigate the problem of local convergence in iterative methods, leading to extra hinge losses to be added to the original loss.

    \item We demonstrate state-of-the-art performance on some benchmark datasets for recurrent neural networks (RNNs), 3D point cloud registration, and multimodel image alignment. 
    
\end{itemize}

\bfsection{Differences from Our Prior Work \cite{zhang2023prise} and the Literature.}
This work is a significant extension of \cite{zhang2023prise} by proposing a novel and general framework to improve the performance of iterative models on multiple applications, with deeper theoretical analysis. In contrast to the literature, we are different from:
\begin{itemize}[nosep, leftmargin=*]
    \item \underline{\em Motivation:} We are motivated by the facts that (1) many iterative algorithms such as ICP involve nonconvex minimization that is solved based on some forms of gradient descent, and (2) convex-like loss landscapes may help converge to better solutions than the counterparts. Such reasons lead our deep loss convexification method to a unique position in the literature.

    \item \underline{\em Methodology:} We propose learning convex-like loss landscapes for iterative models by introducing star-convex constraints locally around the ground-truth {\em test-time} parameters (\ie the ones for iterative models, not network weights) into training deep neural networks (DNNs). Such convexification is the key to our success that technically distinguishes our work from others, such as Adversarial Weight Perturbation (AWP) \cite{wu2020adversarial} that tents to explicitly flatten the {\em network weight} loss landscape via injecting the worst-case weight perturbation into DNNs.

    \item \underline{\em Implementation:} We propose an efficient and effective training algorithm for network training with DLC based on random sampling, rather than gradient-based searching algorithms widely used in adversarial training.
    
\end{itemize}

\section{Related Work}

\bfsection{Loss Landscapes of Deep Neural Networks.}
Loss landscapes help in understanding the performance and behavior of deep models. \cite{li2018visualizing, bain2021lossplot, fort2019deep, engstrom2019exploring} visualize the loss landscapes to discuss the properties of networks such as convexity and robustness. Some other works try to understand the mathematical features of networks from the loss landscapes, \eg \cite{cooper2018loss, simsek2021geometry, zhang2021embedding}. A survey on this topic can be found in \cite{sun2020global}.

\bfsection{Nonconvexity \& Convexification.}
Nonconvexity is a challenging problem in the statistical learning field. Researchers find several regularized estimators \cite{tuan2000low, loh2013regularized, loh2015regularized} that can solve this issue partially. With the appearance of convexification, finding a solver for a nonconvexity problem has become a hot topic in the mathematical optimization area. Since the prevalence of deep learning neural networks, researchers try to introduce the concept of convexification \cite{yang2020graduated} into the training process. Several works \cite{wang2020adaptively, mao2016successive, reddi2018adaptive,vettam2019regularized} have shown that convexification can be utilized in training a deep neural network whose loss landscape shows nonconvexity. 

\bfsection{Contrastive Learning}
Recently, learning representations from unlabeled data in a contrastive way \cite{chopra2005learning, hadsell2006dimensionality} has been one of the most competitive research fields, \eg \cite{oord2018representation, tian2020contrastive}. Popular methods such as SimCLR \cite{chen2020simple} and Moco \cite{he2020momentum} apply the commonly used loss function InfoNCE \cite{oord2018representation} to learn latent representation that is beneficial to downstream tasks. Several theoretical studies show that contrastive loss optimizes data representations by aligning the same image's two views (positive pairs) while pushing different images (negative pairs) away on the hypersphere \cite{wang2020understanding, wang2021understanding}. In terms of applications there are a large number of works in images \cite{zimmermann2021contrastive, he2020momentum} and 3D point clouds \cite{tang2022contrastive, yang2021unsupervised}, just to name a few. A good survey can be found in \cite{le2020contrastive}.


\bfsection{3D Point Cloud Registration.}
Point cloud registration, \eg  \cite{xu2021omnet, li2021pointnetlk, qin2022geometric, yew2022regtr}, aims to align two or more point clouds together to build one point cloud. ICP \cite{4767965} and its variants such as SparseICP \cite{mavridis2015efficient}, robustICP \cite{phillips2007outlier}, RPM\cite{chui2003new}, TEASER \cite{yang2019polynomial} and TEASER++ \cite{yang2020teaser}, are popular methods for point matching which aim to select some key points and compute the transformation based on the selected key points. In the era of deep learning, PointNetLK \cite{aoki2019pointnetlk} uses a differentiable Lucas-Kanade for feature matching. DCP \cite{wang2019deep} uses attention-based modules for feature matching and solves the point correspondence problem through self-supervision. PCRNet \cite{sarode2019pcrnet} uses iterative passes of a feature network to aggregate transformations. PRNet \cite{wang2019prnet} utilizes selected key points and iterative passes based on actor-critic modules. RPM-Net \cite{yew2020rpm} is a deep learning based approach for rigid point cloud registration by employing RPM that is more robust to initialization. Deep global registration \cite{choy2020deep} uses a differentiable framework for pairwise registration of real-world 3D scans. Predator \cite{Huang2021CVPR} is a model for pairwise point cloud registration with deep attention to the overlap region. Nice surveys on point cloud registration can be found in \cite{huang2021comprehensive, zhang2020deep}.

\bfsection{Homography Estimation.}
Homography estimation is a classic task in computer vision. The feature-based methods \cite{ye2017robust, fu2019adaptive, ye2019fast} have existed for several decades but require similar contextual information to align the source and target images. To overcome this problem, researchers use deep neural networks \cite{erlik2017homography, nguyen2018unsupervised, le2020deep, zhang2020content} to increase the alignment robustness between the source and template images. For instance, DHM \cite{detone2016deep} produces a distribution over quantized homographies to directly estimate the real-valued homography parameters. MHN \cite{le2020deep} utilizes a multi-scale neural network to handle dynamic scenes. Since then, finding a combinatorial method from classical and deep learning approaches has become possible. Recent models such as CLKN \cite{chang2017clkn} and DeepLK \cite{zhao2021deep} focus on learning a feature map for traditional Inverse Compositional Lucas-Kanade method on multimodal image pairs. Also, IHN \cite{Cao_2022_CVPR} provides a correlation finding mechanism and iterative homography estimators across different scales to improve the performance of homography estimation without any untrainable part. A good survey can be found in \cite{agarwal2005survey}. 

\begin{figure}[t]
	\begin{minipage}[b]{0.485\columnwidth}
		\begin{center}
			\centerline{\includegraphics[width=.8\linewidth, keepaspectratio,]{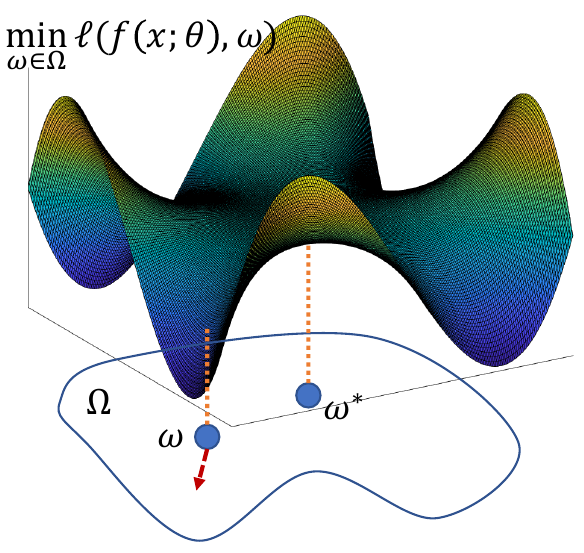}}
			\centerline{(a)}
		\end{center}
	\end{minipage}
	\hfill
	\begin{minipage}[b]{0.485\columnwidth}
		\begin{center}
			\centerline{\includegraphics[width=.8\linewidth,keepaspectratio]{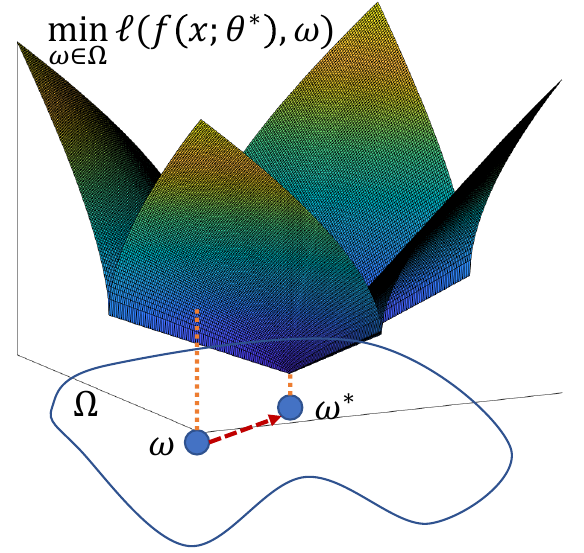}}
			\centerline{(b)}
		\end{center}
	\end{minipage}
	\vspace{-5mm}
    \caption{Illustration of differences in loss landscapes {\bf (a)} without and {\bf (b)} with convexification \wrt\ $\omega$, given data $x$.}
\label{fig:idea}
	\vspace{-5mm}
\end{figure}

\section{Method}

\subsection{Preliminaries}\label{sec:method:preliminary}

\begin{defi}[Star-Convexity \cite{lee2016optimizing}]\label{def:1}
A function $f:\mathbb{R}^n\rightarrow\mathbb{R}$ is {\em star-convex} if there is a global minimum $\omega^*\in\mathbb{R}^n$ such that for all $\lambda\in [0, 1]$ and $\omega\in\mathbb{R}^n$, it holds that
\begin{align}\label{eqn:sc_def1}
    f((1-\lambda) \omega^* + \lambda \omega) &\leq  (1-\lambda) f(\omega^*) + \lambda f(\omega).
\end{align}
\end{defi}

\begin{defi}[Strong Star-Convexity \cite{pmlr-v125-hinder20a}]
A differentiable function $f:\mathbb{R}^n\rightarrow\mathbb{R}$ is {\em $\mu$-strongly star-convex} with constant $\mu>0$ if there is a global minimum $\omega^*\in\mathbb{R}^n$ so that for $\forall \omega\in\mathbb{R}^n$, it holds 
\begin{align}\label{eqn:ssc_def1}
    f(\omega^*) \geq f(\omega) + \nabla f(\omega)^T(\omega^* - \omega) + \frac{\mu}{2}\|\omega^* - \omega\|^2, 
\end{align}
where $\nabla$ denotes the (sub)gradient operator, $(\cdot)^T$ denotes the matrix transpose operator, and $\|\cdot\|$ denotes the $\ell_2$ norm. Note when $\mu=0$ Eq. \ref{eqn:ssc_def1}, will become equivalent to Eq.~\ref{eqn:sc_def1}.
\end{defi}

\begin{lemma}\label{lem:1}
The following conditions hold iff
a function $f:\mathbb{R}^n\rightarrow\mathbb{R}$ is $\mu$-strongly star-convex, given a global minimum $\omega^*\in\mathbb{R}^n$ and $\forall \lambda\in[0,1], \forall \omega\in\mathbb{R}^n$: 
\begin{align}    
    & \hspace{-3mm} f(\omega^*) \leq f(\Tilde{\omega}) - \frac{\mu}{2}\|\omega^*-\Tilde{\omega}\|^2, \label{eqn:lem-1} \\
    & \hspace{-3mm} f(\Tilde{\omega}) \leq (1-\lambda) f(\omega^*) + \lambda f(\omega) - \frac{\lambda(1-\lambda)\mu}{2}\|\omega^*-\omega\|^2, \label{eqn:lem-2}
\end{align}
where $\Tilde{\omega} = (1-\lambda) \omega^* + \lambda \omega$.
\end{lemma}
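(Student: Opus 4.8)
\noindent\emph{Proof proposal.} The plan is to reduce both directions to a one-dimensional statement along the line segment joining $\omega^*$ to an arbitrary $\omega$, and to recognize \ref{eqn:lem-1} and \ref{eqn:lem-2} as two consequences of a single monotonicity fact. Fix $\omega\in\mathbb{R}^n$, put $D:=\|\omega-\omega^*\|$, and set $g(t):=f\big((1-t)\omega^*+t\omega\big)$ for $t\in[0,1]$. Then $g$ is differentiable with $g'(t)=\nabla f\big((1-t)\omega^*+t\omega\big)^{T}(\omega-\omega^*)$, and $g(0)=f(\omega^*)$, $g(1)=f(\omega)$, while the point $\tilde{\omega}$ of the statement is the one at parameter $t=\lambda$, with $\|\omega^*-\tilde{\omega}\|^{2}=\lambda^{2}D^{2}$. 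Because $f$ is differentiable and $\omega^*$ is a global minimizer, $\nabla f(\omega^*)=0$, hence $g'(0)=0$; this pins down the left-endpoint value of the difference quotient $\varphi(t):=\big(g(t)-g(0)\big)/t$, extended by $\varphi(0):=g'(0)=0$ (which makes $\varphi$ continuous on $[0,1]$).

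For the forward direction, I would instantiate \ref{eqn:ssc_def1} with $y_t:=(1-t)\omega^*+t\omega$ playing the role of $\omega$. Since $\omega^*-y_t=-t(\omega-\omega^*)$, this reads exactly $g(0)\ge g(t)-t\,g'(t)+\tfrac{\mu t^{2}D^{2}}{2}$, and dividing by $t^{2}$ turns it into $\tfrac{d}{dt}\varphi(t)\ge\tfrac{\mu D^{2}}{2}$ on $(0,1]$; equivalently, $t\mapsto\varphi(t)-\tfrac{\mu D^{2}}{2}t$ is nondecreasing on $[0,1]$. Comparing this nondecreasing function at $t=0$ and $t=\lambda$ yields $g(\lambda)-g(0)\ge\tfrac{\mu D^{2}}{2}\lambda^{2}=\tfrac{\mu}{2}\|\omega^*-\tilde{\omega}\|^{2}$, which is \ref{eqn:lem-1}; comparing it at $t=\lambda$ and $t=1$ and clearing the factor $\lambda$ yields $g(\lambda)\le(1-\lambda)g(0)+\lambda g(1)-\tfrac{\mu D^{2}}{2}\lambda(1-\lambda)$, which is \ref{eqn:lem-2}. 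The cases $\lambda\in\{0,1\}$ are immediate. In words, \ref{eqn:lem-1} is quadratic growth at $\tilde{\omega}$ and \ref{eqn:lem-2} is the secant/chord inequality, both anchored at the minimizer.

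For the converse, assume \ref{eqn:lem-1} and \ref{eqn:lem-2} for all $\lambda\in[0,1]$ and $\omega\in\mathbb{R}^n$. Taking $\lambda=1$ in \ref{eqn:lem-1} (so $\tilde{\omega}=\omega$) gives $f(\omega^*)\le f(\omega)$ for every $\omega$, so $\omega^*$ is indeed a global minimum. Now fix $\omega$, write $s:=1-\lambda$, subtract $f(\omega)$ from both sides of \ref{eqn:lem-2}, and divide by $s>0$ to obtain $\tfrac{f(\omega+s(\omega^*-\omega))-f(\omega)}{s}\le\big(f(\omega^*)-f(\omega)\big)-\tfrac{\mu(1-s)}{2}\|\omega-\omega^*\|^{2}$. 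Letting $s\to0^{+}$, differentiability sends the left side to the directional derivative $\nabla f(\omega)^{T}(\omega^*-\omega)$ and the right side to $f(\omega^*)-f(\omega)-\tfrac{\mu}{2}\|\omega-\omega^*\|^{2}$; rearranging is exactly \ref{eqn:ssc_def1}, and since $\omega$ was arbitrary and $\omega^*$ a global minimum, $f$ is $\mu$-strongly star-convex.

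I expect the forward direction to be the crux. The gradient form \ref{eqn:ssc_def1} only \emph{upper}-bounds $f(\omega)-f(\omega^*)$ by a quantity involving $\nabla f(\omega)$, whereas \ref{eqn:lem-1}--\ref{eqn:lem-2} are \emph{lower} bounds, so they cannot be extracted from algebra on \ref{eqn:ssc_def1} alone; the working ideas are the restriction $g$, reading $g(0)\ge g(t)-t\,g'(t)+\tfrac{\mu t^{2}D^{2}}{2}$ as the monotonicity of $\varphi(t)-\tfrac{\mu D^{2}}{2}t$, and using $\nabla f(\omega^*)=0$ to anchor $\varphi(0)=0$. The only technical care needed is that $\varphi$ is a priori differentiable only on $(0,1]$ while continuous up to $0$, which still suffices for the monotonicity comparison; the converse is a routine first-order limit and poses no real difficulty.
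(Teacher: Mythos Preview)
Your argument is correct and takes a genuinely different route from the paper. The paper treats Eq.~\ref{eqn:lem-1} and Eq.~\ref{eqn:lem-2} by two separate devices: for Eq.~\ref{eqn:lem-1} it asserts that the one-dimensional cut through $\omega^*$ and $\omega$ ``forms a convex shape'' and then swaps the roles of $\omega^*,\omega$ in Eq.~\ref{eqn:ssc_def1}, using $\nabla f(\omega^*)=0$; for Eq.~\ref{eqn:lem-2} it sets $g(\omega)=f(\omega)-\tfrac{\mu}{2}\|\omega\|^2$, checks that $g$ satisfies the gradient form of star-convexity, and then applies the chord inequality Eq.~\ref{eqn:sc_def1} to $g$. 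You instead extract both inequalities from a single mechanism, the monotonicity of $t\mapsto\varphi(t)-\tfrac{\mu D^2}{2}t$ on $[0,1]$, read off at the pairs $(0,\lambda)$ and $(\lambda,1)$. Your path is more self-contained: the paper's swap step leans on the cut being convex, which star-convexity does not literally guarantee (what it does guarantee is precisely your monotonicity of the secant slope $\varphi$), and its second step tacitly uses that the gradient and chord forms of star-convexity coincide for differentiable functions. On the other hand, the paper's $g=f-\tfrac{\mu}{2}\|\cdot\|^2$ trick is a clean way to peel off the strong-convexity term in one algebraic move and is worth remembering. You also supply the converse direction explicitly via the $s\to0^+$ limit in Eq.~\ref{eqn:lem-2}, which the paper's proof does not spell out.
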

\begin{proof}
A cut through $\omega^*, \omega$ forms a convex shape if $f$ is star-convex. Therefore, since $\nabla f(\omega^*) = \mathbf{0}$, Eq. \ref{eqn:ssc_def1} will lead to Eq. \ref{eqn:lem-1} by replacing $\omega$ with $\Tilde{\omega}$ when switching the notations of $\omega^*, \omega$ in the equation. Letting $g(\omega) = f(\omega) - \frac{\mu}{2}\|\omega\|^2$, based on Eq.~\ref{eqn:ssc_def1} we have $g(\omega^*)\geq g(\omega) + \nabla g(\omega)^T(\omega^* - \omega)$, \ie $g$ is star-convex. Together with Eq. \ref{eqn:sc_def1}, we can obtain Eq.~\ref{eqn:lem-2}, and complete our proof.
\end{proof}
Eq. \ref{eqn:lem-1} implies that $\omega^*$ will be a (local) minimum if it holds for $\forall \omega, \lambda$. In fact, Lemma \ref{lem:1} discusses the (tight) strong star-convexity {\em with no gradients}. In our approach we will use this lemma to incorporate the strong star-convexity as constraints in network training (see Sec. \ref{sssec:objective} for details).

\subsection{Deep Loss Convexification}

\subsubsection{Learning \& Inference Framework}
As shown in Fig. \ref{fig:idea}, by learning the network weights $\theta$ in Fig. \ref{fig:idea}(a), the loss landscape \wrt prediction $\omega$ in Fig. \ref{fig:idea}(b) is expected to be convex-like such as star-convex so that at test time the iterations can converge to the (near) optimal solution $\omega^*$.

\bfsection{Deep Reparametrization.}
The high dimensional spaces defined by deep neural networks provide the feasibility to reshape the loss landscapes. Such deep reparametrization has been used as a means of reformulating some problems such as shape analysis \cite{celledoni2022deep}, super-resolution and denoising \cite{bhat2021deep}, while preserving the properties and constraints in the original problems. This insight at test time can be interpreted as
\begin{align}\label{eqn:downstream_task}
    \min_{\omega\in\Omega} \ell(x, \omega) \xRightarrow[\text{via deep learning}]{\text{reparametrization}} \min_{\omega\in\Omega} \ell(f(x; \theta^*), \omega),
\end{align}
where $x\in\mathcal{X}$ stands for the input data, $\ell$ for a nonconvex differentiable function (\eg the Lucas-Kanade (LK) loss \cite{lucasiterative}) parametrized by $\omega\in\Omega$, $f:\mathcal{X}\times\Theta\rightarrow\mathcal{X}$ for an auxiliary function presented by a neural network with learned weights $\theta^*\in\Theta$ (\eg the proposed network in DeepLK), and $\ell_f$ for the loss with deep reparametrization (\eg the DeepLK loss \cite{zhao2021deep}). In this way, the learning problem is how to train the network so that the optimal solutions can be located using gradient descent (GD) given data.

\bfsection{Network Training.} 
Based on Eq. \ref{eqn:learning} and our convexification insight, it is intuitive to formulate our learning problem as 
\begin{align}\label{eqn:deep_reparametrization}
    \theta^* = & \argmin_{\theta\in\Theta}\sum_i \ell(f(x_i; \theta), \omega_i^*), \\
    & \mbox{s.t.} \; \ell(f(x_i; \theta), \omega_i)\in\mathcal{S}(x_i, \omega_i^*), \omega_i\in\mathcal{N}_{\omega_i^*}\subseteq\Omega, \forall i, \nonumber
\end{align}
where $\mathcal{S}(x_i, \omega_i^*)$ denotes a structural space, \ie a set of scalars that form a convex-like shape around ground-truth $\omega_i^*$ given data $x_i$ (different datapoints will have different locally convex-like shapes), and $\mathcal{N}_{\omega_i^*}$ denotes the neighborhood of $\omega_i^*$ in the space $\Omega$. 
Ideally, it is expected that the structural constraint $\mathcal{S}$ can be met by each individual $(x_i, \omega_i^*)$ on the loss landscape. 

\bfsection{Iterative Inference.} 
We can simply apply Eq. \ref{eqn:fixed-point} for prediction, in general. Alternatively, to improve the robustness of our predictions, in our experiments we adopt average models \cite{bishop2006pattern} over iterations as our final result, defined as
\begin{align}\label{eqn:iteration_test}
    & \hat{\omega}^* = \hat{\omega}_T, \hat{\omega}_t = \frac{1}{t}\sum_{\tau=1}^t \Delta\hat{\omega}_{\tau}, \Delta\hat{\omega}_t = g(\hat{x}, \hat{\omega}_{t-1}; \theta^*).
\end{align}


\subsubsection{Star-Convexification}\label{sssec:objective}


\bfsection{Geometric Constraints.}
Learning exact star-convex loss landscapes with the strong condition is challenging, even in very high dimensional spaces and in local regions. One common solution is to introduce slack variables as penalty to measure soft-margins. However, this may significantly break the smoothness of star-convex loss, and thus destroy the nice convergence property of gradient descent. Therefore, in order to preserve the smoothness, we consider the geometric constraints that have capability to improve the loss landscape smoothness at different levels, just in case that only one constraint is too strong. That is,
\begin{itemize}[nosep, leftmargin=*]
    \item {\em A local minimum constraint from Def. \ref{def:1}:} On the local surface of loss landscape, the loss at the ground-truth $\omega^*$ should reach the minimum. This is also one of the requisitions for star-convexity. 
    
    \item {\em A strong star-convexity constraint from Eq. \ref{eqn:lem-1}:} This constraint implies that there should exist a quadratic lower envelope of the loss landscape with a single minimum at ground-truth $\omega^*$. 
    
    \item {\em Another strong star-convexity constraint from Eq. \ref{eqn:lem-2}:} This constraint imposes strong convexity on all the curves that connect $\omega^*$ with any other point on the loss landscape.
\end{itemize}

The fundamental difference between the two strong star-convexity constraints is the positioning of $\nabla f$, where Eq. \ref{eqn:lem-1} is posited at $\omega^*$ while Eq. \ref{eqn:lem-2} is posited at $\omega$. All the three constraints can lead to their own solution spaces for the same objective, but with an overlap where solutions will better approximate star-convexity.
This insight motivates our formulation as follows.

\bfsection{Formulation.}
As a demonstration purpose, we introduce star-convexity into Eq. \ref{eqn:deep_reparametrization} as the structural space $\mathcal{S}$. This naturally leads to the following constrained optimization problem with soft-margins based on Eq. \ref{eqn:deep_reparametrization}:
\begin{align} 
    \hspace{-0.5mm}\min_{\theta\in\Theta} & \sum_i\left\{ h_{\theta}(\omega_i^*) + \rho\cdot\mathbb{E}_{\omega_i\sim\mathcal{N}_{\omega_i^*}}\Big[\epsilon_{\omega_i} + \gamma_{\omega_i} + \xi_{\omega_i}\Big]\right\} \label{eqn:problem} \\
    \hspace{-2mm}\mbox{s.t.} \hspace{1.5mm} & h_{\theta}(\omega_i^*) \leq h_{\theta}(\Tilde{\omega}_i) + \epsilon_{\omega_i}, \label{eqn:con1} \\
    & h_{\theta}(\omega_i^*) \leq h_{\theta}(\omega_i) - \frac{\mu}{2}\|\omega_i^* - \omega_i\|^2 + \gamma_{\omega_i}, \label{eqn:con2} \\
    & h_{\theta}(\Tilde{\omega}_i) \leq (1-\lambda)h_{\theta}(\omega_i^*) + \lambda h_{\theta}(\omega_i) \nonumber \\
    & \hspace{20mm} - \frac{\lambda(1-\lambda)\mu}{2}\|\omega_i^* - \omega_i\|^2 + \xi_{\omega_i}, \label{eqn:con3} \\
    & \forall\epsilon_{\omega_i}\geq0, \forall\gamma_{\omega_i}\geq0, \forall \xi_{\omega_i}\geq0, \forall i, \nonumber
\end{align}
where $h_{\theta}(\cdot) \stackrel{def}{=} \ell(f(x_i;\theta), \cdot)$ denotes the loss for simplicity,  $\omega_i$ is a random sample from $\mathcal{N}_{\omega_i^*}$, $\Tilde{\omega}_i \stackrel{def}{=} (1-\lambda) \omega_i^* + \lambda \omega_i$ is a linear combination of $\omega_i, \omega_i^*$, $\epsilon_{\omega_i}, \gamma_{\omega_i}, \xi_{\omega_i}$ are three slack variables, and $\rho\geq0, \mu\geq0$ are predefined trade-off and surface sharpness parameters, respectively. Note that the integration of Eqs. \ref{eqn:con1}, \ref{eqn:con2}, \ref{eqn:con3} aims to enforce the loss landscapes to be strongly star-convex. The overlap of functional spaces defined by the three constraints greatly helps regularize the network weights.


\bfsection{Contrastive Training.}
Our approach is highly related to contrastive learning since during training we try to create new fake samples $\Tilde{\omega}_i, \omega_i$ (For some applications such as point cloud registration, such fake samples can be easily generated in an online fashion with domain knowledge), compare their losses with $h_{\theta}(\omega^*)$, and finally solve our optimization problem above. The contrastive learning comes from the nature of strong star-convexity, leading to extra hinge losses that control the loss landscapes towards being star-convex. 


\bfsection{Implementation.}
Eqs. \ref{eqn:con1}, \ref{eqn:con2}, \ref{eqn:con3} define a large pool of inequalities for each data point with varying $\omega_i$ and $\lambda$, where any inequality will return a hinge loss. To address such computational issues, as listed in Alg. \ref{alg:DLC} we propose a {\em sampling} based neural network training algorithm. Specifically,
\begin{itemize}[nosep, leftmargin=*]
    \item {\em Sampling from $\mathcal{N}_{\omega_i^*}$ for $\omega_i$:} Same as SGD, we sample a fixed number of $\omega_i$ for each ground truth, and then compute the average loss over all the samples.

    \item {\em Hyperparameter $\lambda\in[0,1]$:} We simply take $\lambda$ as a predefined hyperparameter that are shared by all the constraints. We have evaluated the way of randomly sampling multiple pairs of $(\lambda, \omega_i)$ and then computing the average hinge loss for learning. We observe that the results are very similar to those with the predefined $\lambda$, but the training time is much longer, as for each pair of $(\lambda, \omega_i)$ we have to compute the hinge losses. 
    
\end{itemize}

\bfsection{Computational Complexity.}
Our training time is approximately twice more than the baseline, given the same number of epochs and the same hardware. This can be deduced from Eqs. \ref{eqn:con1}-\ref{eqn:con3}. At test time, the network inference complexity will be unchanged, because our method does not change the network architecture.

\bfsection{Star-Convexity \vs Convexity.}
Our star-convexification is more like learning one-point convexity \cite{li2017convergence}, where we only impose the (local) convexity at the ground-truth parameters within a local region (controlled by $\Delta$). This way is appreciated by the optimization problems in downstream tasks. In contrast, convexification may be too strong and unnecessary so that every point, including the ground-truth, within a local region has to be convex. This will also lead to more challenges in sampling, thus slowing down the training efficiency and convergence. In practice, we often observe that convexification leads to inferior performance but much higher computational cost, compared with star-convexification.

\begin{algorithm}[t]
    \SetAlgoLined
    \SetKwInOut{Input}{Input}\SetKwInOut{Output}{Output}
    \Input{training data $\{(x_i, \omega^*_i)\}$, deep loss function $h_{\theta}$, hyperparameters $\lambda, \rho, \mu$}
    \Output{network parameters $\theta^*$}
    \BlankLine
    Randomly initialize $\theta$;
            
    \Repeat{Converge or maximum number of iterations is reached}{
        
        Randomly select a training datapoint $(x_i, \omega^*_i)$;
        
        Randomly sample multiple $\omega_i\sim\mathcal{N}_{\omega_i^*}$; 

        Compute $\Tilde{\omega}_i = (1-\lambda)\omega_i^* + \lambda\omega_i$ for each sample of $\omega_i$;
        
        Update $\theta$ by solving Eq. \ref{eqn:problem} with constraints Eqs. \ref{eqn:con1}, \ref{eqn:con2}, \ref{eqn:con3};
    }
    \Return $\theta^* \leftarrow \theta$;
    \caption{Deep Loss Convexification with Star-Convexity}\label{alg:DLC} 
\end{algorithm}

\bfsection{Differences from Our Prior Work \cite{zhang2023prise}.}
There are two distinct changes in our formulation, compared with \cite{zhang2023prise}. They are:
\begin{itemize}[nosep, leftmargin=*]
    \item {\em No adversarial training 
    (\ie a minimax problem as in \cite{zhang2023prise}) is needed.} We observe that without the max selection on the hinge losses, our new test performance is very similar to that of \cite{zhang2023prise}, but the training stability seems to be improved slightly.

    \item {\em An extra constraint in Eq. \ref{eqn:con1} is added.} We find that this new constraint can significantly improve the performance on certain tasks such as point cloud registration, compared with \cite{zhang2023prise}, and so far we have not observed any case where the performance of our new formula is worse than \cite{zhang2023prise}. One hypothesis to explain this is that this new constraint helps smooth the loss landscape to better approximate the star-convexity shape. Please refer to our experimental section for more details.
\end{itemize}

\subsubsection{Near-Optimal Solutions of Star-Convexification}\label{sssec:analysis}

\cite{pmlr-v125-hinder20a} has shown that the GD based algorithms can find near-optimal solutions for the optimization of star-convex functions. Below we analyze the near-optimal property of our star-convexification, assuming that the networks can learn star-convex loss landscapes \wrt\ predictions perfectly.

\begin{lemma}\label{lem:2}
Letting function $h_{\theta^*}$ in Eqs. \ref{eqn:problem}-\ref{eqn:con3} be $L$-Lipschitz and $\mu$-strongly star-convex, 
then it holds that for an arbitrary datapoint $x$, the difference between its prediction and ground truth, $\|\omega-\omega^*\|$, will be upper-bounded by $\frac{2L}{\mu}$. That is,
\begin{align}\label{eqn:lem2}
    \|\omega-\omega^*\|\leq\frac{2L}{\mu}.
\end{align}
\end{lemma}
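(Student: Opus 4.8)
The plan is to read off the bound from the tension between the two hypotheses: $L$-Lipschitz continuity limits how fast $h_{\theta^*}$ can change along a ray, while $\mu$-strong star-convexity forces $h_{\theta^*}$ to grow at least quadratically as we move away from $\omega^*$; a quadratic eventually dominates a linear term, and the crossover radius is exactly $2L/\mu$.

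First I would record the only consequence of Lipschitzness that is needed. If $h_{\theta^*}$ is $L$-Lipschitz, then $|h_{\theta^*}(\omega) - h_{\theta^*}(\omega^*)| \leq L\|\omega - \omega^*\|$ for every $\omega$ (equivalently, every (sub)gradient has norm at most $L$); I would use this function-value form so the argument survives even when $h_{\theta^*}$ is only subdifferentiable. Next I would invoke strong star-convexity through Lemma \ref{lem:1}: taking $\lambda = 1$ in Eq. \ref{eqn:lem-1} makes $\Tilde{\omega} = \omega$, giving $h_{\theta^*}(\omega^*) \leq h_{\theta^*}(\omega) - \frac{\mu}{2}\|\omega^* - \omega\|^2$, i.e. $\frac{\mu}{2}\|\omega - \omega^*\|^2 \leq h_{\theta^*}(\omega) - h_{\theta^*}(\omega^*)$.

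Chaining the two displays yields $\frac{\mu}{2}\|\omega - \omega^*\|^2 \leq L\|\omega - \omega^*\|$. If $\omega = \omega^*$ the claim is trivial; otherwise I divide through by $\|\omega - \omega^*\| > 0$ to obtain $\|\omega - \omega^*\| \leq \frac{2L}{\mu}$, which is Eq. \ref{eqn:lem2}. Alternatively, one can stay with gradients: write Eq. \ref{eqn:ssc_def1} with the gradient anchored at $\omega$, use $h_{\theta^*}(\omega^*) \leq h_{\theta^*}(\omega)$ to cancel the function values, and bound $-\nabla h_{\theta^*}(\omega)^T(\omega^* - \omega) \leq \|\nabla h_{\theta^*}(\omega)\|\,\|\omega^* - \omega\| \leq L\|\omega^* - \omega\|$ by Cauchy–Schwarz; the resulting inequality is identical, and I would present the first (gradient-free) route since it leans only on the already-established Lemma \ref{lem:1}.

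I do not anticipate a genuine technical obstacle here; the one point that warrants care is the interpretation of the statement rather than its derivation. A bound $\|\omega - \omega^*\| \leq \frac{2L}{\mu}$ cannot hold for all $\omega\in\mathbb{R}^n$ if $h_{\theta^*}$ were $L$-Lipschitz and $\mu$-strongly star-convex \emph{globally}, so the hypotheses must be understood to hold on the local neighborhood $\mathcal{N}_{\omega_i^*}$ on which DLC reshapes the landscape, with $L$ and $\mu$ the corresponding local constants. Read that way, the lemma states that any iterate of the inference procedure in Eq. \ref{eqn:iteration_test} that lands in this convexified region stays within $\frac{2L}{\mu}$ of the ground truth, which is precisely the near-optimality guarantee we are after; I would state this caveat explicitly so the quantifier on $\omega$ and the meaning of $L,\mu$ are unambiguous.
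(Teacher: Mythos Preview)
Your proposal is correct and matches the paper's proof essentially step for step: the paper also rearranges the strong star-convexity inequality (it cites Eq.~\ref{eqn:con2}, which is the constraint form of your $\lambda=1$ instance of Eq.~\ref{eqn:lem-1}) to get $\|\omega-\omega^*\|^2 \le \tfrac{2}{\mu}[h_{\theta^*}(\omega)-h_{\theta^*}(\omega^*)]$, bounds the bracket by $L\|\omega-\omega^*\|$ via Lipschitzness, and divides through when $\|\omega-\omega^*\|>0$. Your extra gradient-based alternative and the caveat about the local reading of $L,\mu$ are sound additions but go beyond what the paper writes.
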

\begin{proof}
    Based on Eq. \ref{eqn:con2} and the assumptions, we have
    \begin{align}\label{eqn:w_dis}
        \|\omega - \omega^*\|^2 \leq \frac{2}{\mu}\Big[h_{\theta^*}(\omega) - h_{\theta^*}(\omega^*)\Big] \leq \frac{2L}{\mu}\|\omega - \omega^*\|,
    \end{align}
    leading to Eq. \ref{eqn:lem2} when $\|\omega - \omega^*\|>0$ holds. If $\|\omega - \omega^*\|=0$ Eq. \ref{eqn:lem2} still holds, which completes our proof.
\end{proof}
From this lemma we can clearly see that with star-convex constraints the predictions are always within the local regions of the ground truth. In fact, the distance is upper-bounded by the {\em contrastive loss}, which contributes to the hinge losses that are appended to the original loss in our formulation. This observation indicates that the contribution of the hinge losses to a well-trained network may be higher than the original loss. When the star-convex constraint is violated, \ie $\gamma_{\omega}>0$ in Eq. \ref{eqn:con2}, the upper-bound in Eq. \ref{eqn:lem2} will increase to 
$\frac{L}{\mu}\left[1+\left(1+\frac{2\mu}{L^2}\cdot\gamma_{\omega}\right)^{\frac{1}{2}}\right]$. Therefore, when $\gamma_{\omega}$ is small, the upper-bound grows approximately with rate $O(\gamma_{\omega})$, and when $\gamma_{\omega}$ is large, the upper-bound grows with rate $O(\sqrt{\gamma_{\omega}})$. Note that Lemma \ref{lem:2} holds for Eq. \ref{eqn:iteration_test} as well.

\begin{lemma}\label{lem:3}
    Let $\omega_{i,t}$ be the prediction for data $x_i$ at the $t$-th iteration and $\omega_i^*$ be the ground truth. Supposing that the prediction errors are uncorrelated, \ie $\mathbb{E}_i[(\omega_{i,m}-\omega_i^*)^T(\omega_{i,n}-\omega_i^*)]=0, m\neq n\in[T]$ where $\mathbb{E}$ denotes the expectation operator, then based on the prediction in Eq. \ref{eqn:iteration_test} and Lemma~\ref{lem:2}, it holds that
    \begin{align}
        \mathbb{E}_i\left[\left\|\left(\frac{1}{T}\sum_{t=1}^T\omega_{i,t}\right) - \omega_i^*\right\|^2\right] \leq \frac{4L^2}{\mu^2}\cdot\frac{1}{T} = O\left(\frac{1}{T}\right).
    \end{align}
\end{lemma}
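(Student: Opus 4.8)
The plan is the elementary ``variance of an average'' argument: expand the squared norm of the averaged error into a double sum, use the uncorrelated-error hypothesis to annihilate the off-diagonal terms, and bound each surviving diagonal term by Lemma~\ref{lem:2}. Concretely, I would first rewrite the quantity of interest as an average of per-iteration errors and expand the $\ell_2$ norm as a bilinear form:
\begin{align*}
\left\|\Big(\tfrac{1}{T}\textstyle\sum_{t=1}^T\omega_{i,t}\Big)-\omega_i^*\right\|^2
=\left\|\tfrac{1}{T}\textstyle\sum_{t=1}^T(\omega_{i,t}-\omega_i^*)\right\|^2
=\tfrac{1}{T^2}\sum_{m=1}^T\sum_{n=1}^T(\omega_{i,m}-\omega_i^*)^T(\omega_{i,n}-\omega_i^*).
\end{align*}

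Next I would apply $\mathbb{E}_i[\cdot]$ to both sides and split the double sum into its diagonal part ($m=n$) and its off-diagonal part ($m\neq n$). By the hypothesis $\mathbb{E}_i[(\omega_{i,m}-\omega_i^*)^T(\omega_{i,n}-\omega_i^*)]=0$ for $m\neq n$, every off-diagonal term contributes nothing, leaving only the $T$ diagonal terms:
\begin{align*}
\mathbb{E}_i\!\left[\left\|\Big(\tfrac{1}{T}\textstyle\sum_{t=1}^T\omega_{i,t}\Big)-\omega_i^*\right\|^2\right]
=\tfrac{1}{T^2}\sum_{t=1}^T\mathbb{E}_i\!\left[\|\omega_{i,t}-\omega_i^*\|^2\right].
\end{align*}
Since $h_{\theta^*}$ is $L$-Lipschitz and $\mu$-strongly star-convex, Lemma~\ref{lem:2} (which, as remarked after its proof, also applies to the iterative scheme of Eq.~\ref{eqn:iteration_test}) gives $\|\omega_{i,t}-\omega_i^*\|\leq \tfrac{2L}{\mu}$, hence $\mathbb{E}_i[\|\omega_{i,t}-\omega_i^*\|^2]\leq \tfrac{4L^2}{\mu^2}$ for every $t$. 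Substituting this uniform bound into the $T$ summands yields $\tfrac{1}{T^2}\cdot T\cdot\tfrac{4L^2}{\mu^2}=\tfrac{4L^2}{\mu^2}\cdot\tfrac{1}{T}$, which is exactly the asserted $O(1/T)$ rate, completing the proof.

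The argument is short and the only point requiring care — the ``hard part,'' such as it is — is the bookkeeping on which per-iteration object Lemma~\ref{lem:2} is invoked for: one must read Eq.~\ref{eqn:iteration_test} so that each summand $\omega_{i,t}$ being averaged is itself a legitimate prediction (equivalently, a point produced by the fixed-point map $g$) for the datapoint $x_i$, so that the $\tfrac{2L}{\mu}$ neighborhood bound genuinely controls it. Once that identification is made explicit, no further probabilistic assumptions are needed — in particular unbiasedness of the errors is not required, only their pairwise uncorrelatedness — and the remainder is the routine computation above.
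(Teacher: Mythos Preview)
Your proposal is correct and follows essentially the same approach as the paper: the paper simply cites Bishop's ensemble-averaging argument to obtain $\mathbb{E}_i[\|\tfrac{1}{T}\sum_t\omega_{i,t}-\omega_i^*\|^2]=\tfrac{1}{T^2}\sum_t\mathbb{E}_i[\|\omega_{i,t}-\omega_i^*\|^2]$ and then substitutes the bound from Lemma~\ref{lem:2}, whereas you spell out the double-sum expansion and the annihilation of cross terms explicitly. Your added remark about identifying each $\omega_{i,t}$ with a legitimate prediction so that Lemma~\ref{lem:2} applies is a useful clarification that the paper handles only by the one-line note that Lemma~\ref{lem:2} ``holds for Eq.~\ref{eqn:iteration_test} as well.''
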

\begin{proof}
    By referring to Sec. 14.2 in \cite{bishop2006pattern}, we can show that
    \begin{align}
        \mathbb{E}_i\left[\left\|\left(\frac{1}{T}\sum_{t=1}^T\omega_{i,t}\right) - \omega_i^*\right\|^2\right] & = \frac{1}{T^2}\sum_{t=1}^T\mathbb{E}_i\Big[\|\omega_{i,t} - \omega_i^*\|^2\Big]. \nonumber
    \end{align}
    Substituting Eq. \ref{eqn:lem2} into this equation completes our proof.
\end{proof}
This lemma states that using the average solutions the prediction error will decrease with the number of iterations $T$ growing, on average. This shows the robustness of the prediction with the average solution. Though the uncorrelated assumption is very strong and in our case it seems not to hold, in practice we do observe significant performance improvement over a few iterations. In our experiments, we will tune the number of iterations as a hyperparameter to demonstrate its effect on performance if explicitly mentioning it; otherwise, by default there will be no iterations and such a tuning procedure (see Sec. \ref{ssec:exp_pc} for details).

\section{Experiments}\label{sec:exp}

\begin{figure}[t] 
\centering 
\vspace{-5mm}
\includegraphics[width=.8\linewidth]{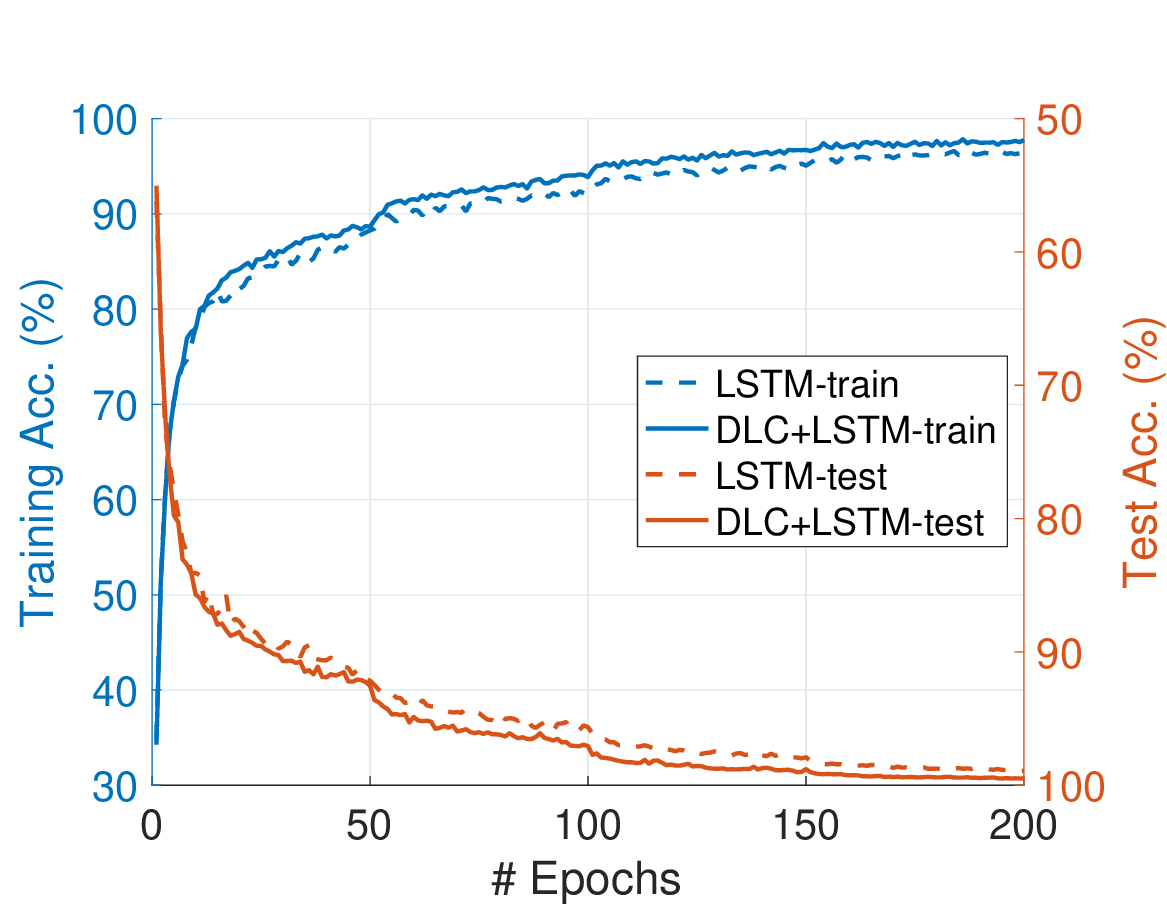}
\vspace{-1mm}
\caption{Performance comparison in both training and testing for LSTM on Pixel-MNIST with and without our DLC.} 
\vspace{-5mm}
\label{fig:lstm}
\end{figure} 

\begin{figure*}[t] 
\centering
\begin{minipage}{0.30\linewidth}
    \includegraphics[width=.8\linewidth]{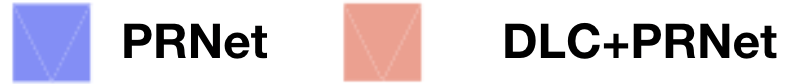}
\end{minipage}

\centering 
\begin{minipage}[b]{0.24\linewidth}
	\begin{center}
	   \centerline{\includegraphics[width=.8\linewidth, keepaspectratio,]{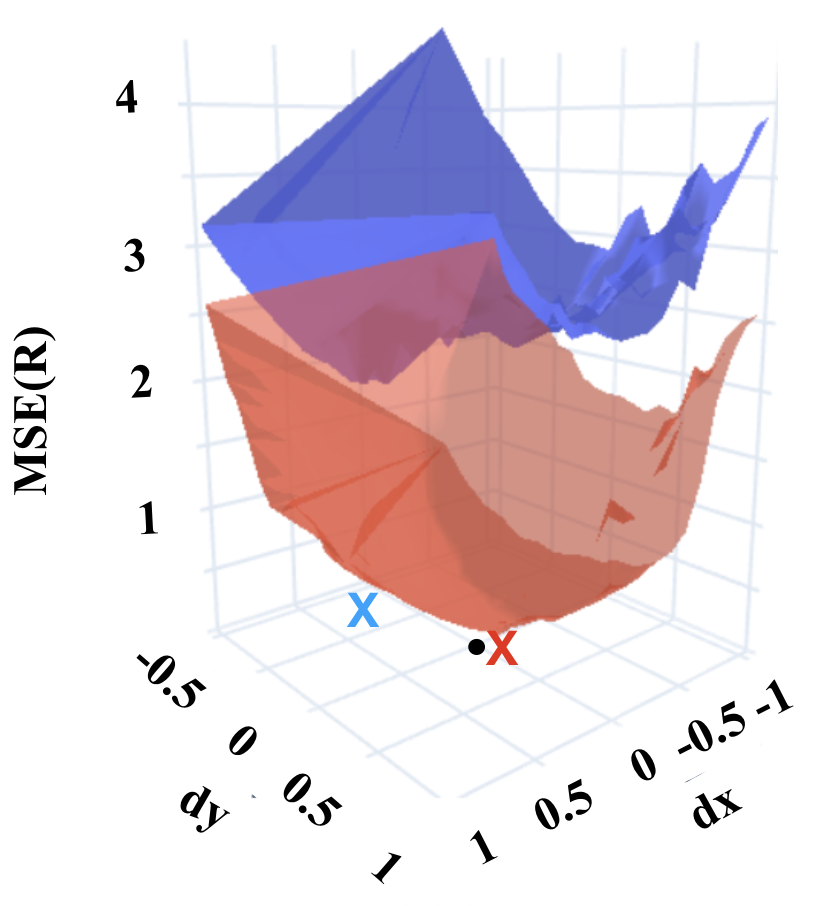}}
		\centerline{(a)}
	\end{center}
\end{minipage}
\hfill
\begin{minipage}[b]{0.24\linewidth}
	\begin{center}
	   \centerline{\includegraphics[width=.8\linewidth, keepaspectratio,]{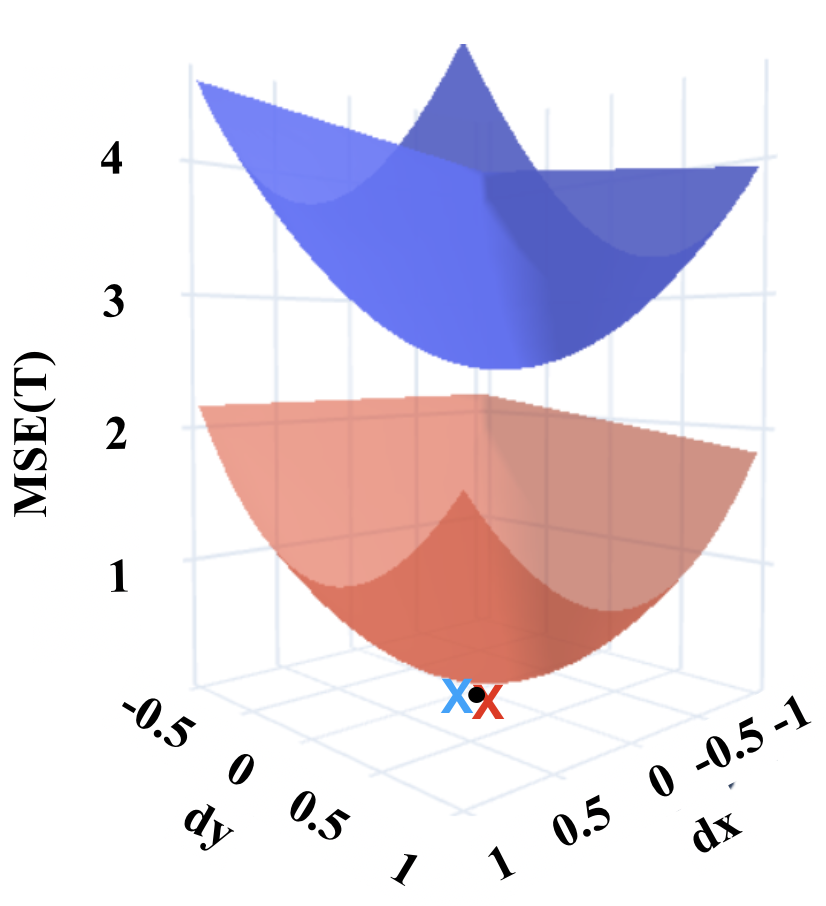}}
		\centerline{(b)}
	\end{center}
\end{minipage}
\hfill
\vline height 150pt depth -25pt width 1pt
\hfill
\begin{minipage}[b]{0.24\linewidth}
	\begin{center}
	   \centerline{\includegraphics[width=.8\linewidth, keepaspectratio,]{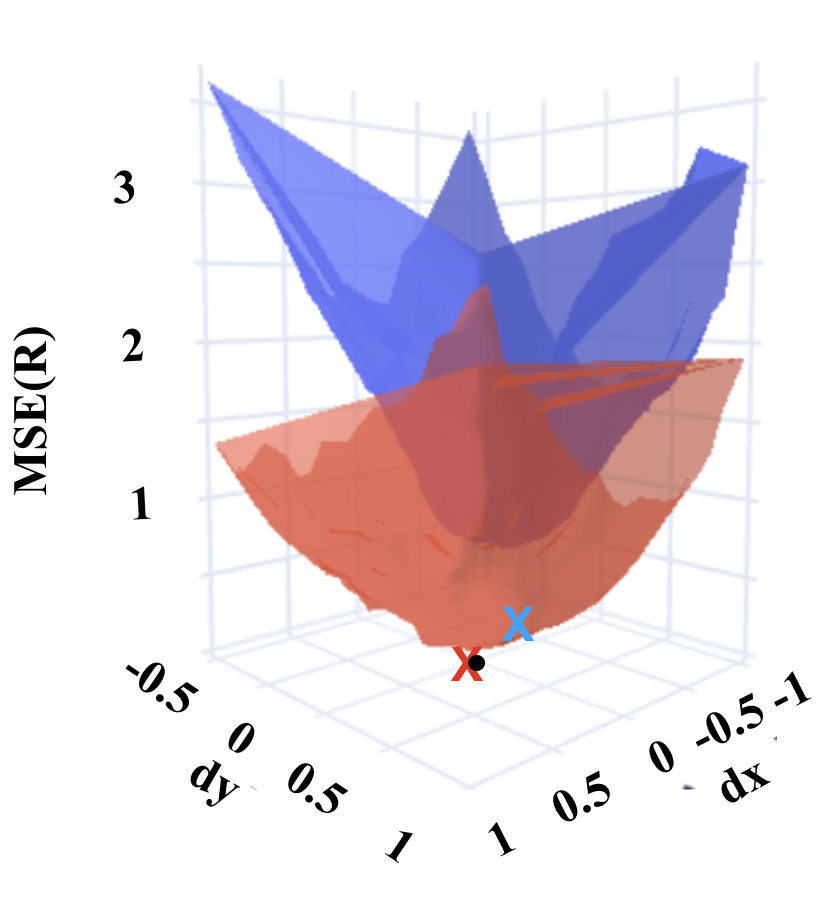}}
		\centerline{(c)}
	\end{center}
\end{minipage}
\hfill
\begin{minipage}[b]{0.24\linewidth}
	\begin{center}
	   \centerline{\includegraphics[width=.8\linewidth, keepaspectratio,]{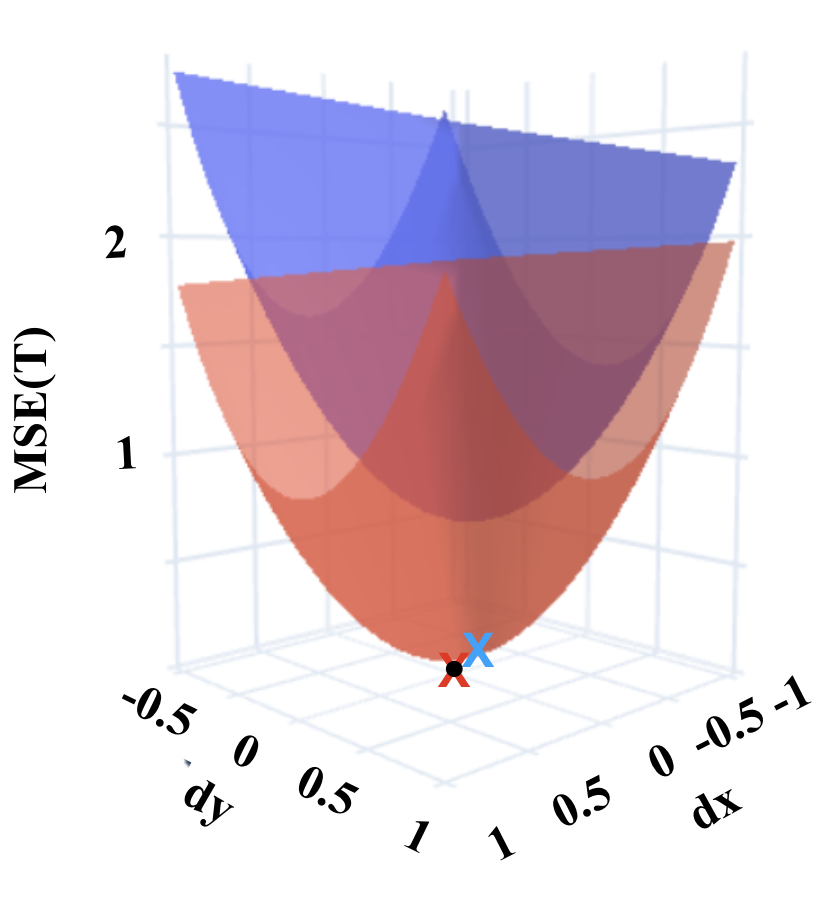}}
		\centerline{(d)}
	\end{center}
\end{minipage}
\vspace{-5mm}
\caption{Test-time loss landscape comparison using PRNet on ModelNet40: {\bf (a,c)} rotation matrices, and {\bf (b,d)} translations.} 
\label{fig:iterative}
\end{figure*} 

\begin{figure*}[t] 
\centering 
\vspace{-3mm}
\includegraphics[width=.8\linewidth]{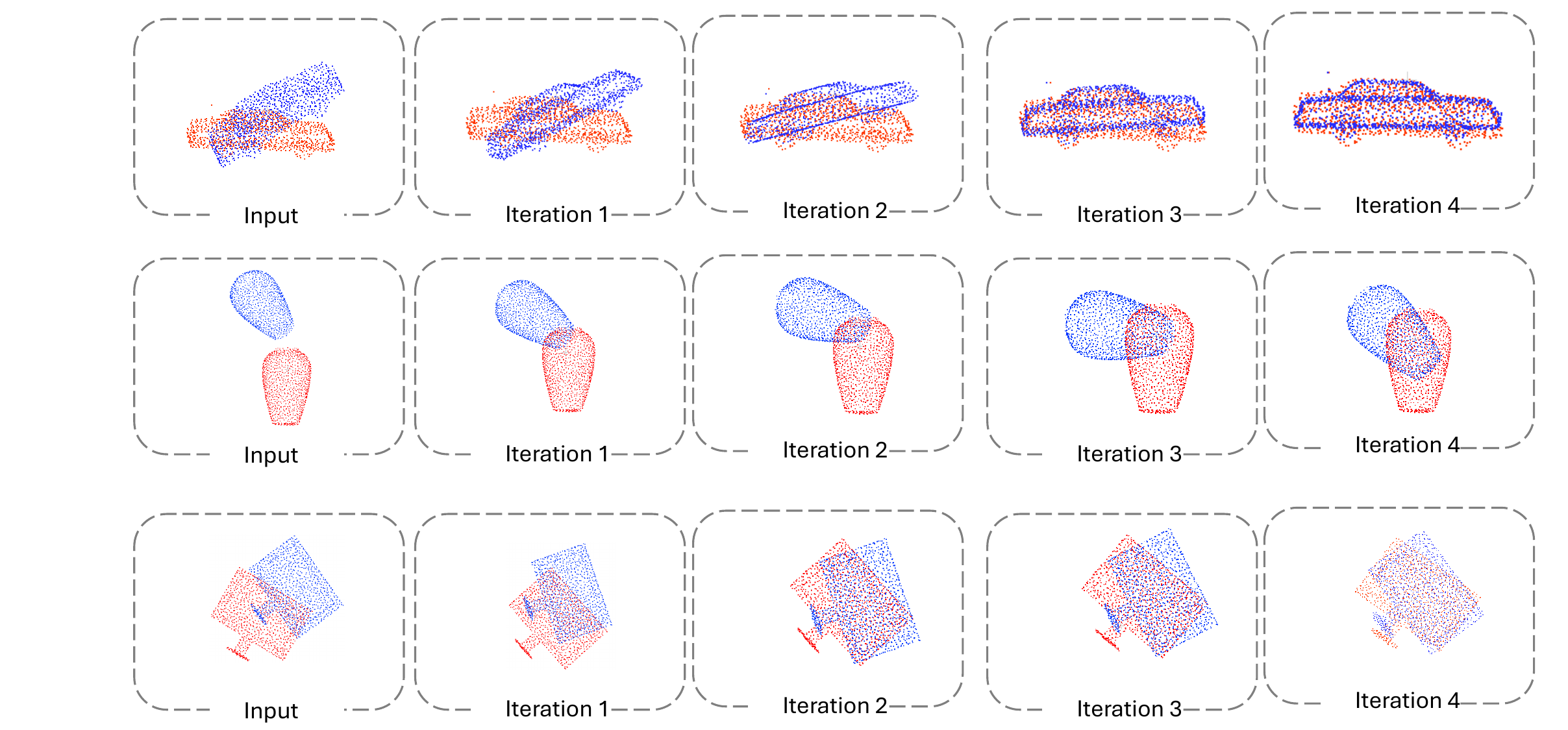}
\vspace{-2mm}
\caption{Illustration of test-time point cloud registration for {\bf (top)} a successful case and {\bf (bottom)} a failure case using DLC+PRNet.}
\vspace{-4mm}
\label{fig:iteration_demo}
\end{figure*} 

\begin{figure}[t] 
\centering 
\vspace{-3mm}
\includegraphics[width=.8\linewidth]{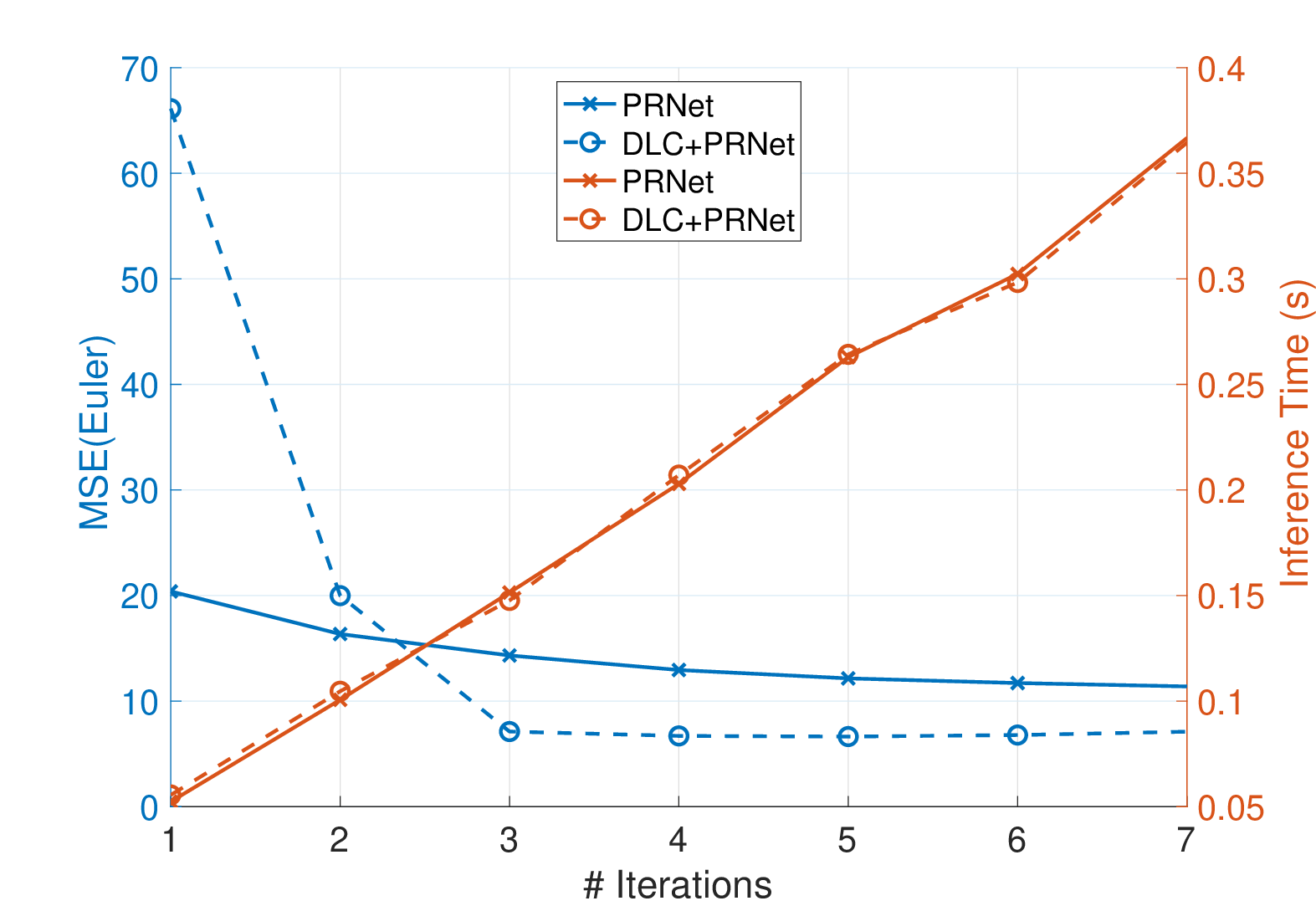}
\vspace{-2mm}
\caption{Illustration of test-time performance comparison with and without DLC on ModelNet40 using PRNet.} 
\vspace{-3mm}
\label{fig:iteration}
\end{figure} 

\begin{figure}[t] 
\centering 
\includegraphics[width=.8\linewidth]{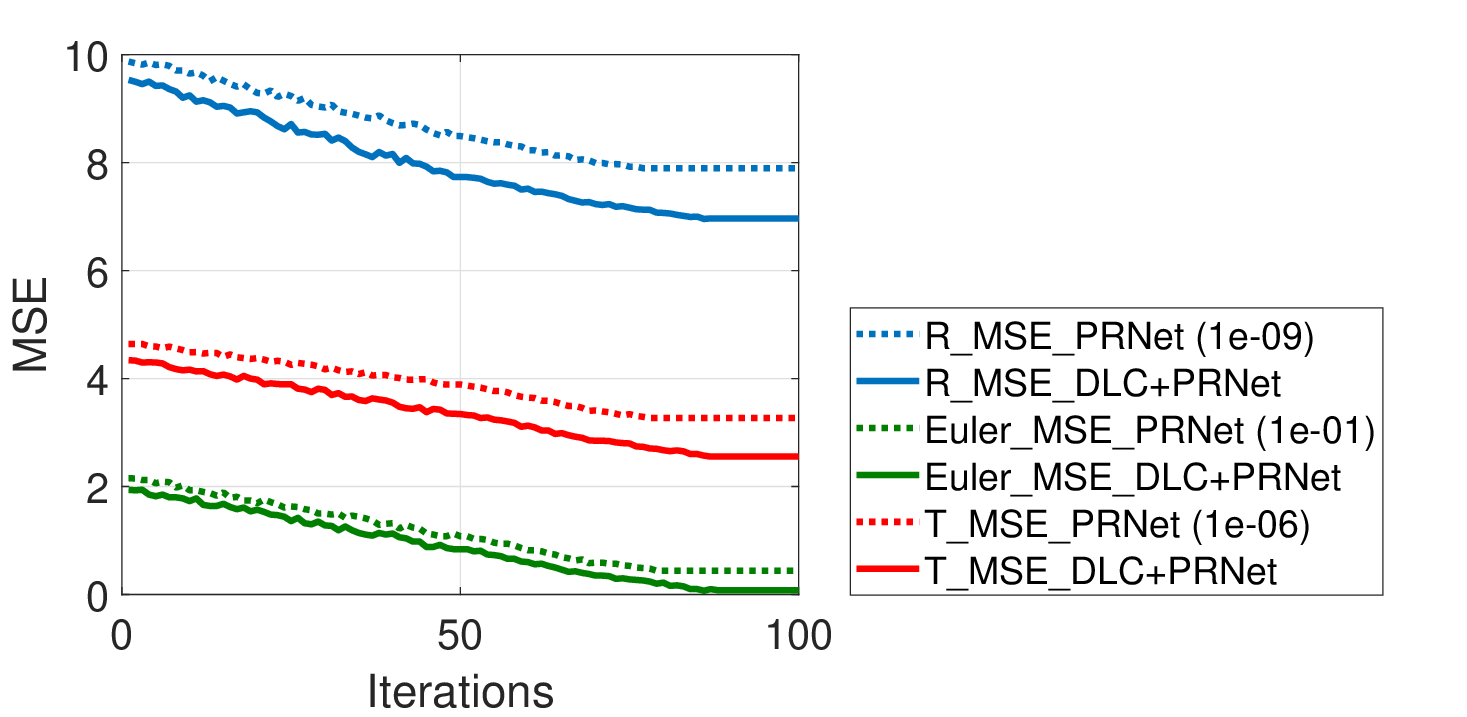}
\vspace{-1mm}
\caption{Comparison results for PRNet \vs DLC+PRNet.} 
\vspace{-5mm}
\label{fig:kitti}
\end{figure}

\subsection{Recurrent Neural Networks: A Naive Iterative Model}

\setlength{\intextsep}{0pt}
\setlength{\columnsep}{5pt}
\begin{wraptable}{r}{.45\linewidth}
    \small
    \caption{RNN classification results on Pixel-MNIST.}\label{tab:lstm}
    \vspace{-2mm}
    \setlength{\tabcolsep}{1pt}
        \begin{tabular}{cc}
    		\toprule  
Method & Acc. (\%) \\
\hline
SMPConv \cite{kim2023smpconv} & 99.75 \\  
S4 \cite{gu2021efficiently} & 99.63 \\
FlexTCN-6 \cite{romero2021flexconv} & 99.62 \\
LSSL \cite{gu2021combining} & 99.53 \\
{\bf Ours: DLC+LSTM} & \underline{{\bf 99.50}} \\
LEM \cite{rusch2021long} & 99.50 \\
Deep IndRNN \cite{li2019deep} & 99.48 \\
coRNN \cite{konstantin2020coupled} & 99.40 \\
LipschitzRNN \cite{erichson2020lipschitz} & 99.40 \\
STAR \cite{turkoglu2021gating} & 99.40 \\
CKCNN \cite{romero2021ckconv} & 99.32 \\
\hline
BN LSTM \cite{cooijmans2016recurrent} & 99.00 \\
LSTM \cite{arjovsky2016unitary} & 98.2 \\

\bottomrule
	    \end{tabular}    
\end{wraptable}

Recurrent neural networks (RNNs) are a family of well-known classic iterative models to process sequential data. To demonstrate our performance as a toy example, we will evaluate DLC based on long short-term memory (LSTM) \cite{hochreiter1997long}, a popular RNN in practice, for image classification.

\bfsection{Dataset.}
We utilize a benchmark dataset, Pixel-MNIST, \ie pixel-by-pixel sequences of images in MNIST \cite{lecun-mnisthandwrittendigit-2010} where each 28x28 image is flattened into a 784 time-step sequence vector and normalized as zero mean and unit variance. 

\bfsection{Training \& Testing Protocols.}
We modify the LSTM code\footnote{\url{https://github.com/pytorch/examples/blob/main/mnist_rnn/main.py}} to integrate it with our training algorithm in Alg. \ref{alg:DLC}, and then train the LSTM using SGD with fine-tuned learning rate but no scheduler. We use the grid search to determine $\rho, \lambda, \mu$. To generate $\omega_i$, we add Gaussian noise to each entry in the ground-truth one-hot vector that is randomly sampled from a distribution $\mathcal{N}(0,1)$, and then apply softmax to generate a probability vector. At test time, the sequential data is fed to the learned LSTM, same as the conventional RNNs. It is worth of emphasizing that the LSTM architecture is unchanged at all, and thus the performance of our learned model is fair to compare with the orginal LSTM.

\bfsection{Results.}
We illustrate the performance with and without DLC in Fig. \ref{fig:lstm} for both training and testing. Clearly, our DLC can consistently improve the LSTM performance in both cases. We further list our comparison results in Table~\ref{tab:lstm}, where all the comparative numbers are cited from the leaderboard at the popular website\footnote{\url{https://paperswithcode.com/sota/sequential-image-classification-on-sequential?p=full-capacity-unitary-recurrent-neural}} that summarizes 22 methods ranging from 1998 to 2023. Table \ref{tab:lstm} lists top-10 methods as well as two LSTM models, where our result ranks at the 5th place. Note that the top-4 models are all much more complicated than LSTM with much more parameters. Without any modification of the LSTM architecture, just learning better weights with DLC we can easily outperform the rest models that are often more complicated as well. All these observations demonstrate that the convex-like loss landscapes learned with DLC help converge to near-optimal solutions.


\begin{table*}[t]
\centering\small
\caption{Performance comparison on {\em unseen point classes} in ModelNet40 and 3DMatch. ModelNet40 uses the pre-generated dataset. 3DMatch uses the same data processing as ModelNet40.}
\vspace{-2mm}
\setlength{\tabcolsep}{2.8pt}{
\begin{tabular}{c|c|cccc|cccc}
\toprule
& & \multicolumn{4}{c|}{w/o ICP refinement}  & \multicolumn{4}{c}{with ICP refinement}\\
& & DCP & \textbf{DLC+DCP} &PRNet & \textbf{DLC+PRNet} & DCP & \textbf{DLC+DCP} & PRNet & \textbf{DLC+PRNet}\\ 
\hline

\multirow{3}{*}{ModelNet40} & MSE(R)  & 0.0013 & \textbf{0.0002} & 0.0038 & 0.0019  & 0.0001 & \textbf{2.99e-5} & 0.0029 & 0.0016 \\



& MSE(Euler)  & 6.8747 & \textbf{1.0836} & 20.3915 & 11.3448  & 0.7928 & \textbf{0.1705} & 16.5405 & 9.5568 \\



& MSE(T)  & \textbf{1.79e-5} & 2.55e-5 & 0.0001 & 0.0002 & \textbf{2.72e-7} & 1.63e-6 & 1.60e-5 & 8.10e-6 \\




\hline
\multirow{3}{*}{3DMatch} & MSE(R) & - & - & 0.1112 & {\bf 0.0560} & - & - & 0.0299 & {\bf 0.0006} \\
& MSE(Euler) & - & - & 1104.9437 & {\bf 294.2320} & - & - & 259.4553 & {\bf 3.1288} \\
& MSE(T) & - & - & 1.6689 & {\bf 0.3372} & - & - & 0.2076 & {\bf 0.0021} \\
\bottomrule
\end{tabular}}
\label{table:reliability}
\vspace{-3mm}
\end{table*}

\begin{table}[t]
\centering\small
\caption{Tranferability results on ShapeNetCore, where both PRNet and DLC+PRNet are pretrained on ModelNet40.}
\vspace{-2mm}
\setlength{\tabcolsep}{1.7pt}{
\begin{tabular}{c|cc|cc}
\toprule
& \multicolumn{2}{c|}{w/o ICP refinement}  & \multicolumn{2}{c}{with ICP refinement}\\
 & PRNet & \textbf{DLC+PRNet} & PRNet & \textbf{DLC+PRNet} \\ 
\hline
\multicolumn{5}{c}{Unseen Class 14 (\# of objects=662)} \\
MSE(R) & 0.0004 & {\bf 0.0002} & 0.0001 & {\bf 8.0e-8}   \\
MSE(Euler) &  2.0465 & {\bf 1.0666} & 0.4404 & {\bf 0.0003} \\
MSE(T) & {\bf 0.0001} & 0.0004 & {\bf 7.0e-6} & 1.0e-5 \\
\hline
\multicolumn{5}{c}{All the Classes (\# of objects=4071)} \\
MSE(R) & 0.0026 & {\bf 0.0025} & 0.0026 & {\bf 0.0019}   \\
MSE(Euler) & {\bf 13.6229} & 15.3662 & 14.8344 & {\bf 10.6357} \\
MSE(T) & {\bf 0.0007} & 0.0015 & {\bf 0.0005} & 0.0006 \\
\bottomrule
\end{tabular}}
\label{table:transferability}
\end{table}

\subsection{3D Point Cloud Registration}\label{ssec:exp_pc}

\bfsection{Datasets.}
We conduct our experiments on ModelNet40 \cite{wu20153d}, ShapeNetCore \cite{chang2015shapenet} and 3DMatch \cite{zeng20173dmatch} with default training, validation and testing splits. Also, we evaluate our approach on the KITTI odometry dataset that is composed of 11 sequences (00-10) with ground truth poses. Following the settings in \cite{choy2019fully,choy2020deep}, we use sequences 00-05 for training, 06-07 for validation, and 08-10 for testing. Additionally, the ground truth poses of KITTI are refined with ICP as done in \cite{choy2019fully,bai2020d3feat}. 

\bfsection{Backbone Networks.}
We adopt two neural network based iterative models, namely deep closest point\footnote{\url{https://github.com/WangYueFt/dcp}} (DCP) and PRNet\footnote{\url{https://github.com/YadiraF/PRNet}}, to demonstrate the benefit of learning with DLC. For our DLC+DCP, we actually concatenate four copies of the DCP network with the same weights as our backbone, similar to the PRNet architecture.

\bfsection{Training \& Testing Protocols.}
Following the setting for each dataset in both training and testing, we select 1024 points for each object. 
We use mean squared error of rotation matrix, MSE(R), mean square error of Euler angle, MSE(Euler), and mean squared error of translation, MSE(T), as our metrics\footnote{\url{https://github.com/vinits5/learning3d}} to evaluate performance. We report average results over three trials with randomly initialized network weights on an Nvidia RTX6000 GPU server.

\bfsection{Hyperparameters.}
We employ Adam \cite{kingma2014adam} as our optimizer with a learning rate 0.001 and weight decay 0.0001 to match the default settings of our backbones. To simplify our sampling in training, we further reparametrize $\lambda$ as a sigmoid function and $\mu$ as an exponential function with trainable parameters. With grid search, by default we set the number of noisy sampling to 3, maximum test-time iterations to 5, $\rho=0.6$ for DLC+DCP on ModelNet40, and $\rho=1$ for DLC+PRNet on ModelNet40 and 3DMatch, without explicitly mentioning. 

\bfsection{Sampling $\omega$.} 
For point cloud registration, each $\omega$ consists of a rotation matrix and a translation vector. We sample the Euler angles for computing rotation matrices because the linear combinations in the angle space guarantee that the corresponding rotation matrices make sense in registration. Same as before, we sample normal Gaussian noise and add it to each Euler angle and each dimension in translation vectors to generate different $\omega$'s for training.

\subsubsection{Loss Landscape Comparison}
We show the learned loss landscapes in Fig. \ref{fig:iterative} by randomly selecting two pairs of point clouds for registration. For visualization, we take two random dimensions in the rotation matrix (or translation vector) as dx and dy, while fixing the rest dimensions, and then manipulate dx and dy by varying their values where $(0,0)$ denotes the ground truth and $\times$'s denote the local minima. As we see, 
\begin{itemize}[nosep, leftmargin=*]
    \item Our approach can lead to much lower errors around the ground truth, verifying \cite{li2018visualizing} that good performance has a strong correlation with the convexity and smoothness of the loss landscape.

    \item Our loss landscapes are much closer to convex-like shapes centered at the ground truth, as we expect because the ground truth should be a (local) minimum around the ground truth.
\end{itemize}
We also observe similar patterns for DCP and DLC+DCP.

\begin{table}[t]
\centering
\caption{State-of-the-art comparison on {\em seen classes} in ModelNet40 with clean data, where colors red, blue, and green in order indicate the top-3 results in each column. }
\vspace{-2mm}
\small
\setlength{\tabcolsep}{2pt}{
\begin{tabular}{c|cc}
\toprule
Method& MSE(Euler) & MSE(T) \\
\midrule
ICP \cite{3dshapeicp} & 894.8973 & 0.0846 \\
Go-ICP \cite{yang2015go} & 140.4773 & 0.0006 \\
FGR \cite{fgr} & 87.6614 & 0.0001 \\
PointNetLK \cite{yaoki2019pointnetlk} & 227.8703 & 0.0005 \\
DeepGMR \cite{yuan2020deepgmr} & 7.9106 & 0.0001 \\

SPA \cite{spa} & 354.57 & 2.00e-5 \\
VCR-Net \cite{wei2020end} & \textcolor{green}{0.2196} & 0.0011 \\
Graphite \cite{saleh2020graphite} & 7.4400 & 0.3100 \\
Graphite + ICP \cite{saleh2020graphite}& 0.7500 & 0.0900 \\
iPCRNet \cite{sarode2019pcrnet} & 9.5210 & 0.7930 \\
CorsNet \cite{kurobe2020corsnet} & 263.7376 & 0.0001 \\
CPDNet \cite{wang2019coherent} & 68.7241 & 0.0024 \\
FPT \cite{biomed_reg} & 25.1001 & 0.0002 \\
RAR \cite{rarnet} & \textcolor{blue}{0.3715} & 7.00e-5\\
FIRE-Net \cite{wu2021feature} & 0.9025 & 3.60e-5 \\
SDFReg \cite{zhang2023sdfreg} & 73.1709 & 2.30e-5 \\   
LC+PRNet \cite{liu2023self} & 1.6718 & 1.96e-4 \\
LC+DCP \cite{liu2023self} & 15.6025 & 4.00e-4 \\
RPM-Net \cite{yew2020rpm} & 38.3098 & 0.0170 \\
\midrule
PRNet \cite{wang2019prnet} & 20.3915 & 0.0001\\
\textbf{DLC+PRNet} & 11.3448 & 0.0002 \\
\textbf{DLC+PRNet+ICP} & 9.5568	& \textcolor{blue}{8.10e-6} \\
\midrule
DCP \cite{wang2019deep} & 1.3073 & \textcolor{green}{3.00e-6}\\
\textbf{DLC+DCP} & 1.0836 & 2.00e-5 \\
\textbf{DLC+DCP+ICP} & \textcolor{red}{\bf 0.1705} & \textcolor{red}{\bf 1.60e-6} \\

\bottomrule
\end{tabular}}
\vspace{-3mm}
\label{table:modelnet-clean}
\end{table}

\begin{figure*}[t]
	\begin{minipage}[b]{0.325\textwidth}
		\centering
			\centerline{\includegraphics[width=\linewidth, keepaspectratio,]{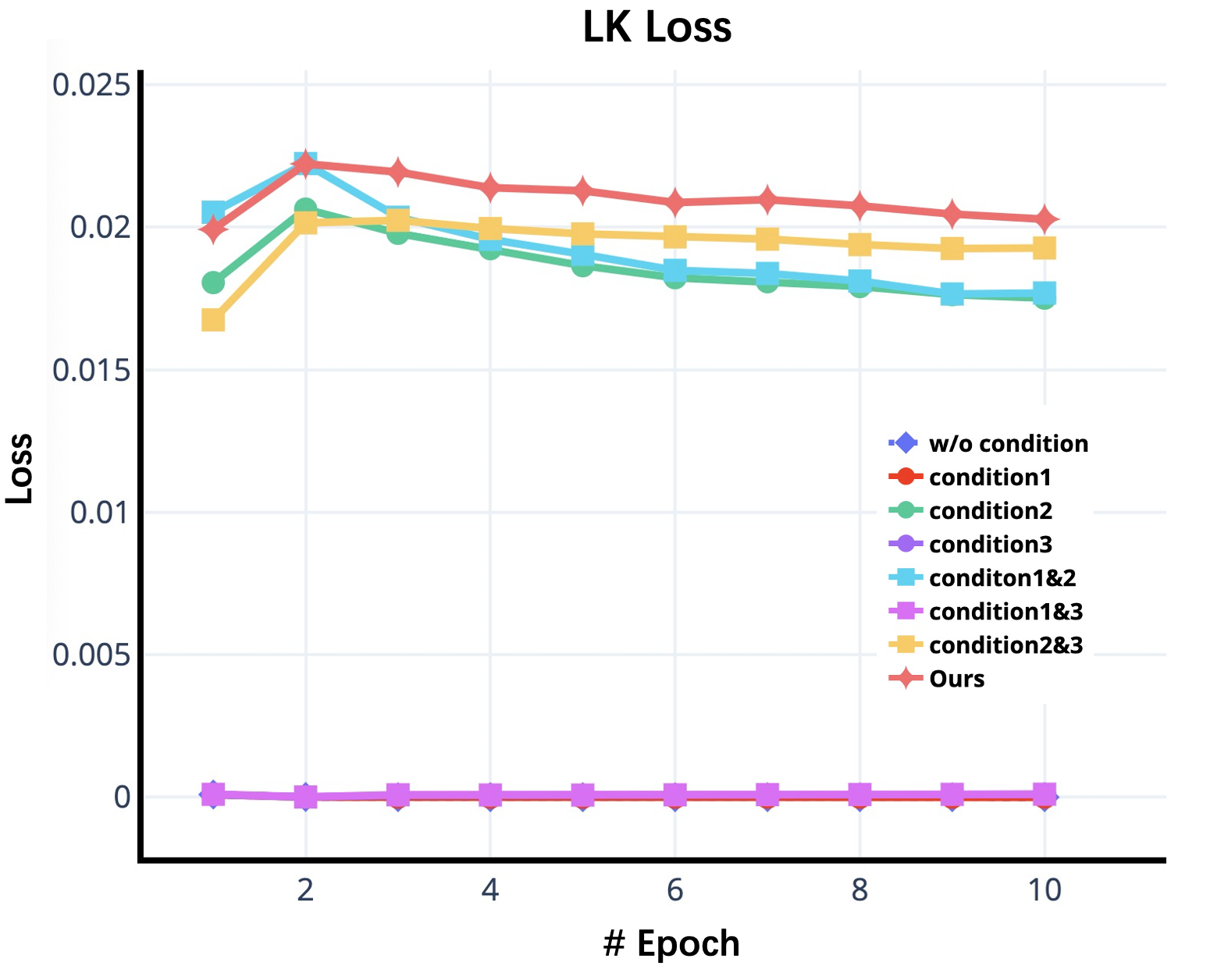}}\vspace{-1mm}
            \centerline{LK loss}	
		
	\end{minipage}
	\hfill
	\begin{minipage}[b]{0.325\textwidth}
		\centering
			\centerline{\includegraphics[width=\linewidth,keepaspectratio]{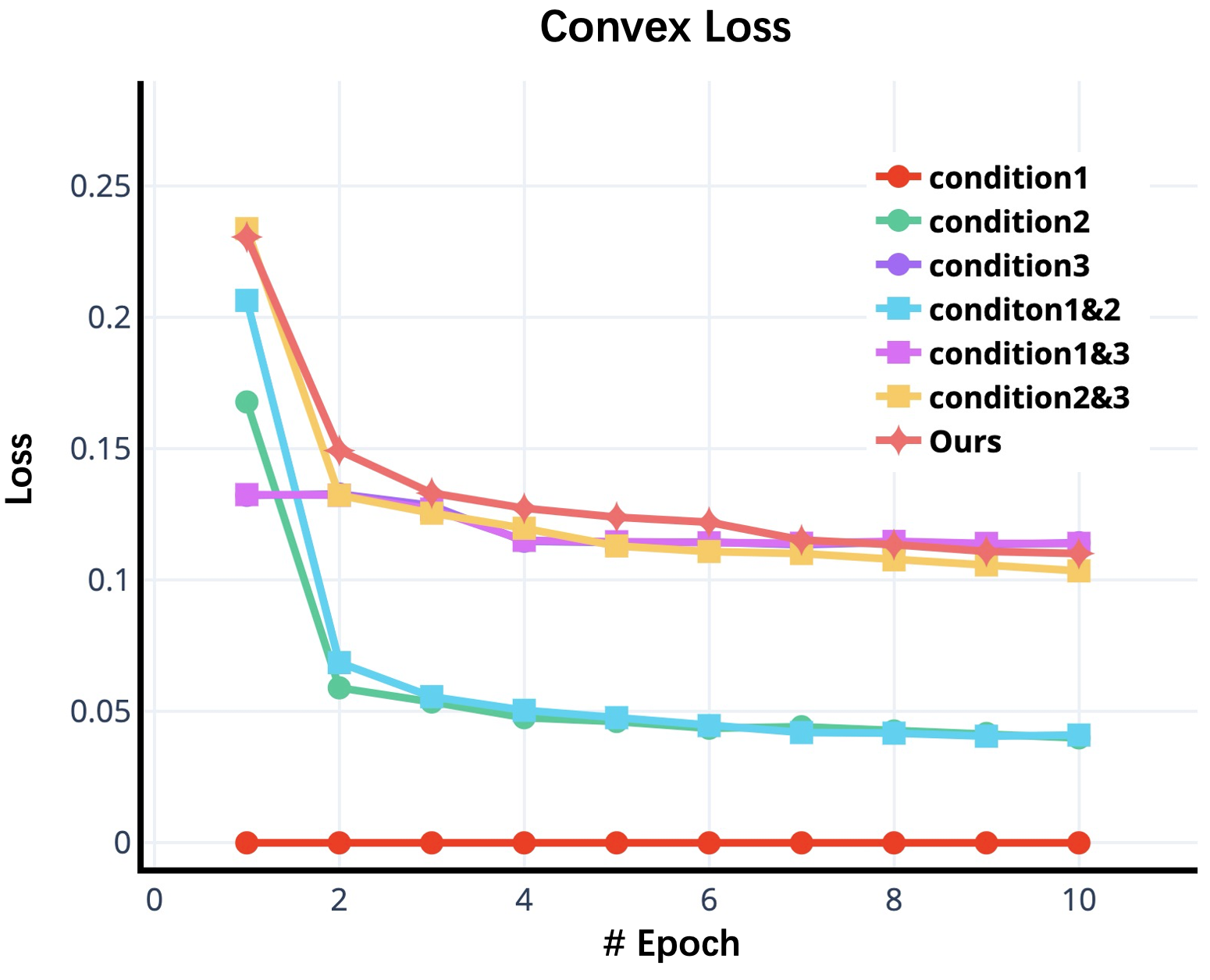}}\vspace{-1mm}
            \centerline{Convexity loss}		
	\end{minipage}
	\hfill
	\begin{minipage}[b]{0.325\textwidth}
		\centering
			\centerline{\includegraphics[width=\linewidth,keepaspectratio]{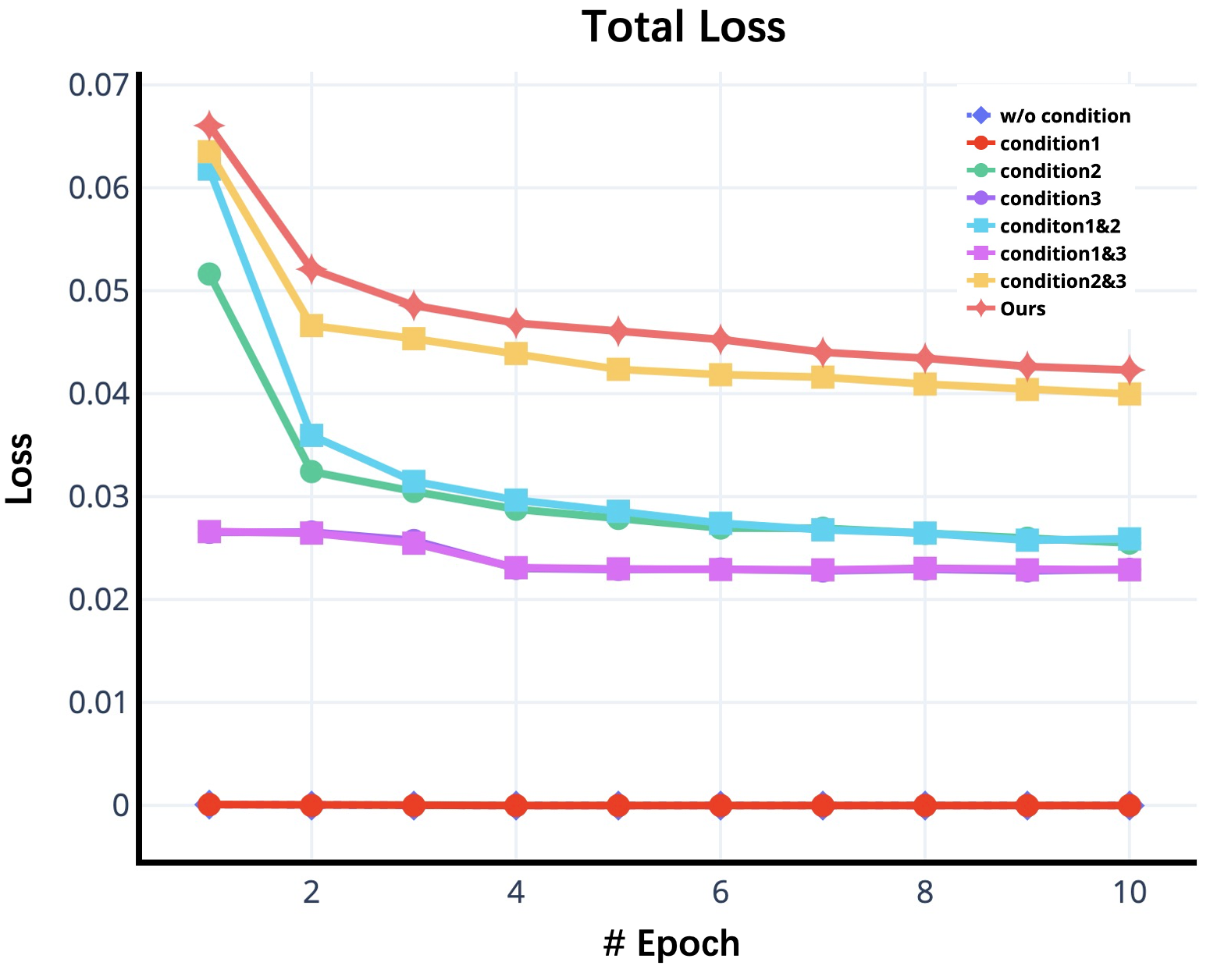}}
            \vspace{-1mm}
            \centerline{Total loss}
		
	\end{minipage}
	\vspace{-2mm}
    \caption{Training loss comparison at stage 3 on Google Earth. Conditions 1-3 correspond to the constraints in Eqs. \ref{eqn:con1}-\ref{eqn:con3} in order.}
\label{fig:loss}
	\vspace{-5mm}
\end{figure*}

\begin{figure}[t] 
\centering 
\vspace{-3mm}
\includegraphics[width=1\linewidth]{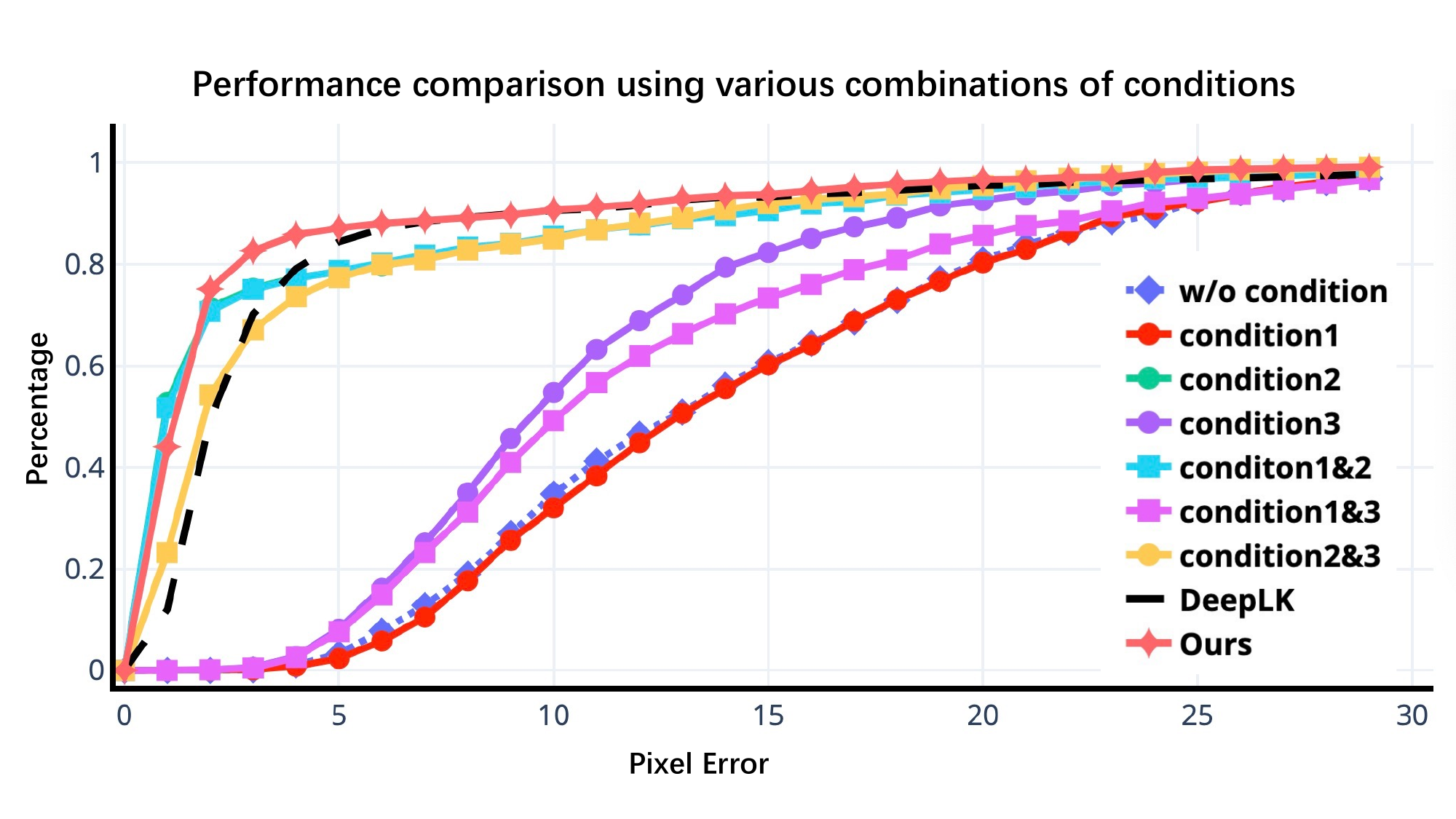}
\vspace{-8mm}
\caption{Performance comparison on Google Earth using the default hyperparameters $\mu=4, \lambda=0.5, \rho=0.2$.} 
\vspace{-5mm}
\label{fig:perf-condition}
\end{figure} 
 
\begin{figure*}[t]
	\begin{minipage}[b]{0.325\textwidth}
		\centering
			\centerline{\includegraphics[width=\linewidth, keepaspectratio,]{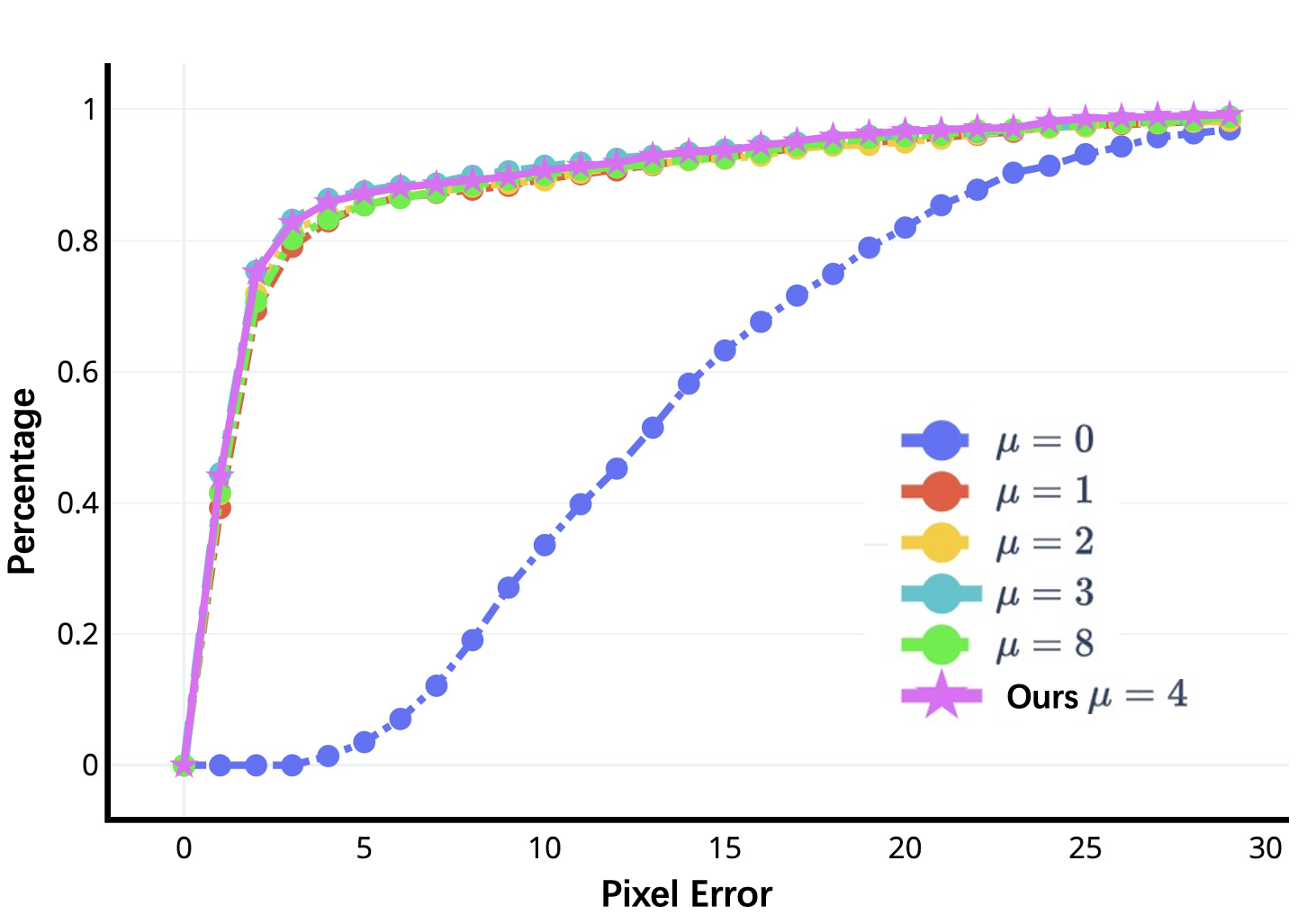}}
			\centerline{Various $\mu$}
	\end{minipage}
	\hfill
	\begin{minipage}[b]{0.325\textwidth}
		\centering
			\centerline{\includegraphics[width=\linewidth,keepaspectratio]{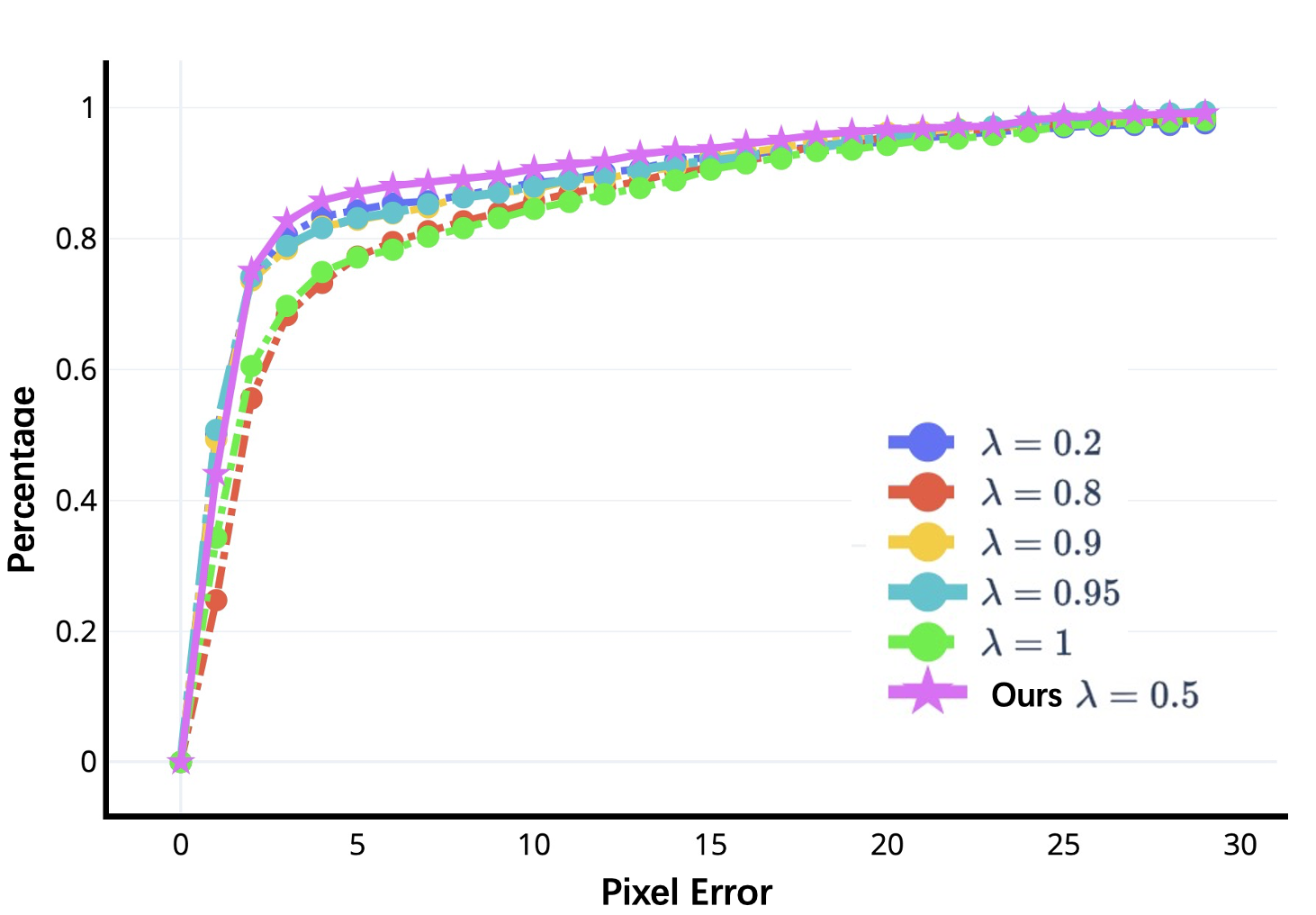}}
			\centerline{Various $\lambda$}
	\end{minipage}
	\hfill
	\begin{minipage}[b]{0.325\textwidth}
		\centering
			\centerline{\includegraphics[width=\linewidth,keepaspectratio]{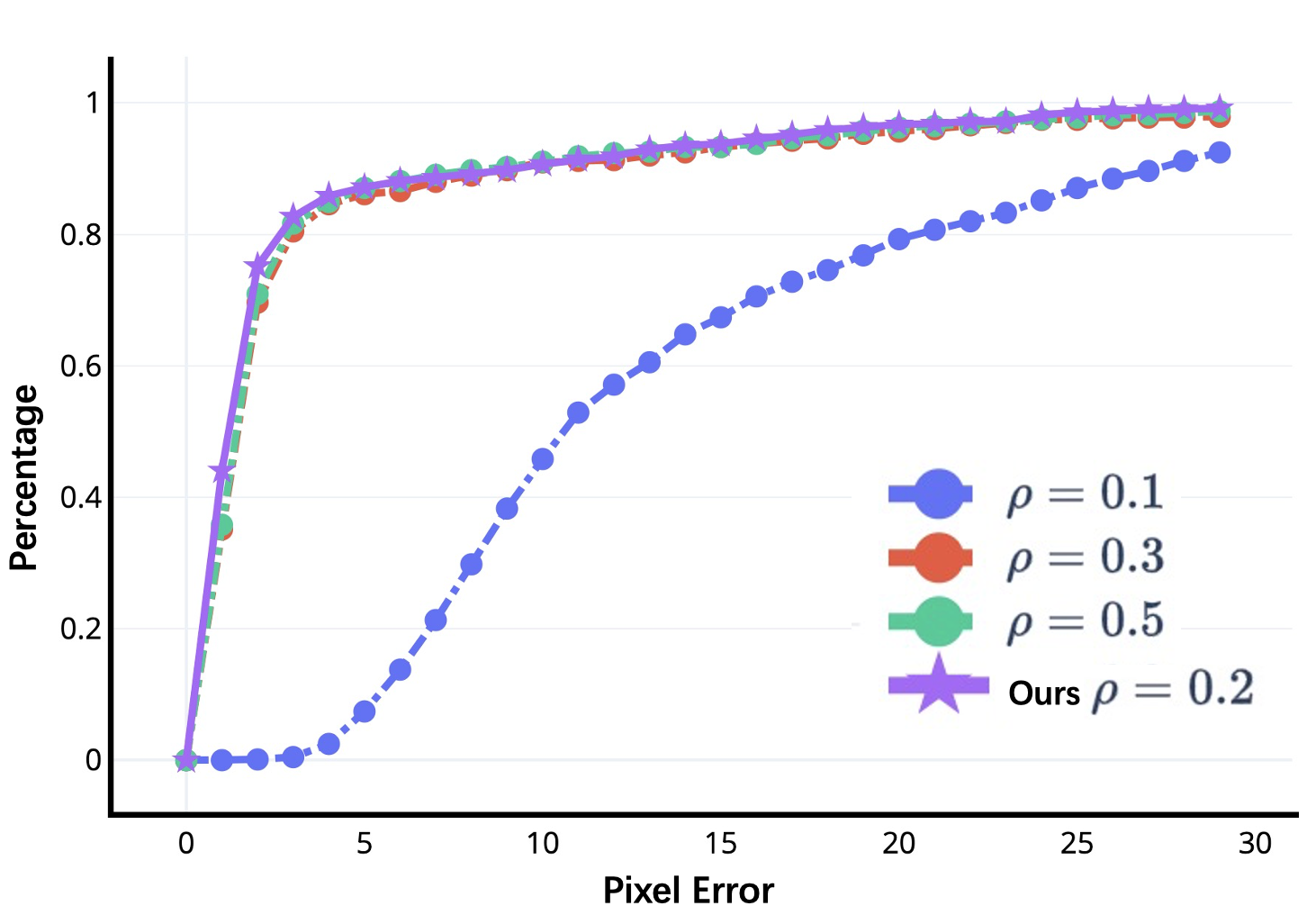}}
			\centerline{Various $\rho$}
	\end{minipage}
	\vspace{-2mm}
    \caption{Pixel error \vs various hyperparameters in Eqs. \ref{eqn:con1}-\ref{eqn:con3} on Google Earth.}
\label{fig:PE-param}
	\vspace{-5mm}
\end{figure*}



\subsubsection{Iterative Inference at Test Time}
We demonstrate the iterative inference process in Fig. \ref{fig:iteration_demo} on ModelNet40, where failure cases often have few overlaps with each other in point clouds. We also illustrate the inference behavior with and without DLC over iterations in Fig. \ref{fig:iteration}. Recall that PRNet is an RNN, and thus we can see that DLC can consistently improve the performance over iterations to reach a better (local) minimum. The iterative inference can also help PRNet slightly as itself is a recursive network. In terms of running time, we can see that DLC does not introduce extra computational burden to the inference, because we do not modify the architecture but learn better network weights. Also, it is clear that the running time is approximately linear to the number of iterations. In our experiments, we fine tune the number of iterations and report the best performance.

\subsubsection{Robustness}

\bfsection{Unseen Class/Scene Testing.}
Table \ref{table:reliability} lists our comparison results, where ``-'' indicates that we cannot train DCP successfully on 3DMatch due to the limited memory of GPUs. ICP refinement refers to a post-processing step where the network outputs are taken as the initialization for ICP. As we see, DLC can always significantly improve the estimation of rotation matrices on both datasets, but sometimes it has little help in translation estimation such as on ModelNet40. One possible reason is that 3DMatch is a real world dataset with various objects in a single frame and contains more texture information with more points, leading to more challenges in registration than ModelNet40. Note that ICP refinement can further boost the performance in all the cases. 


\bfsection{Transferability.}
To show this property, we use the same setting as PRNet with more challenging scenarios, where both PRNet and DLC+PRNet are pretrained on ModelNet40 clean data and then directly tested on ShapeNetCore test data. Table~\ref{table:transferability} summarizes our results. As we see, DLC can still help PRNet achieve better transferability, in general, especially together with ICP refinement. 

\bfsection{Scalability.}
To run PRNet and DLC+PRNet, we first use farthest point sampling to down-sample each raw point cloud into 2048 points\footnote{This preprocess is purely due to the characteristics of PRNet, and different backbone networks may need different preprocesses.}. Then following our previous experiments, we managed to run both methods and show our test results in Fig. \ref{fig:kitti}. Clearly, in all the metrics, DLC+PRNet consistently outperforms PRNet.

\subsubsection{State-of-the-art Comparison}
Table \ref{table:modelnet-clean} lists our comparison results with more baselines. Clearly, our DLC and ICP refinement (almost) consistently and significantly improve the performance of DCP and PRNet, and among all the competitors DLC+DCP+ICP works best. Compared with the second best numbers, our approach has reduced the MSE(Euler) and MSE(T) by 22.4\% and 46.7\% relatively.

\subsection{Multimodel Image Alignment}

\bfsection{Datasets.}
We exactly follow the experimental settings in DeepLK \cite{zhao2021deep}. We select an image from each dataset and resize it to $196\times 196$ pixels as input. Then we randomly perturb 4 points in the four corner boxes with a size of $64\times64$ and resize the chosen region to a $128\times 128$ template image. We implement the same data generation strategy on the three different datasets, namely MSCOCO \cite{lin2014microsoft}, GoogleEarth \cite{zhao2021deep}, Google Maps and Satellite (GoogleMap for short) \cite{zhao2021deep}. Please refer to \cite{zhang2023prise} for more details.

\bfsection{Baseline Algorithms.} We train DHM \cite{detone2016deep} and MHN \cite{le2020deep} from scratch with the best hyperparameters. We use the pretrained model for DeepLK\cite{zhao2021deep} directly. In addition, we use the pretrained models for CLKN \cite{chang2017clkn} and fine-tune it on MSCOCO, GoogleEarth, and GoogleMap to fix the domain gap. Also we compare our approach with a classical algorithm SIFT+RANSAC.

\bfsection{Training \& Testing Protocols.}
Following the consistent setting for each dataset in both training and testing, we use the same resolution for the source and target images as the standard datasets. For training, we train our model with the best hyperparameters on each dataset for 10 times with random initialization of network weights. For testing, we conduct evaluation on the PEs under different thresholds. We report our results in terms of mean and standard deviation over 10 trials. 

\bfsection{Implementation.}
We modify the code of DeepLK. Specifically, we adopt the same network architecture and stage-wise network training and inference used in DeepLK. We use the grid search to tune hyperparameters $\mu, \lambda, \rho$, while the rest hyperparameters keep the same as used in DeepLK.
We train each stage in the network for 10 epochs with a batch size of 4, a constant learning rate $10^{-5}$, and weight decay $0.05$. We take Adam as our optimizer.

\bfsection{Evaluation Metric.}
We use the same evaluation metric as in recent works, Success Rate (SR) \vs Pixel Error (PE), to compare the performance of each algorithm. PE measures the average $L_2$ distance between the 4 ground-truth perturbation points and the 4 output point location predictions (without quantization) from an algorithm, correspondingly. Then the percentages of the testing image pairs whose PEs are smaller than certain thresholds, \ie SR, are computed to compare the performance of different approaches.

\bfsection{Results.}
Given the page limit, below we briefly summarize and discuss our results. Overall, our new formula has achieved very similar performance to that in our prior work \cite{zhang2023prise}. Specifically, 
\begin{itemize}[nosep, leftmargin=*]
    \item {\em Impact of Each Constraint in Training:} Fig. \ref{fig:loss} illustrates our comparisons on the training loss at stage 3. We can see that: (i) The LK loss and hinge loss are balanced well at the same scale, no dominant scenarios occurring. All the losses tend to converge over the epochs. (ii) From Fig. \ref{fig:loss}(c), without any hinge loss, training with the original LK loss alone is very easy to overfit. (iii) Condition 1 alone seems less useful to prevent the overfitting, and condition 2 seems most important among all the three conditions that keep the LK loss away from zero. We also illustrate our performance comparison in Fig. \ref{fig:perf-condition} using different combinations of the constraints, and ``ours'' that contains all the constraints in learning leads to the best performance.

    \item {\em Hyperparameter Tuning:} Fig. \ref{fig:PE-param} illustrates the comparisons among different settings. Overall, our approach is very robust to different hyperparameters. Specifically, Fig. \ref{fig:motivation}(a) demonstrates the necessity of strong star-convexity, which makes sense that we expect to learn a unique minimum within a large local region rather than several. Fig. \ref{fig:PE-param}(b) shows that the midpoint choice (\ie $\lambda=0.5$) indeed provides good performance. Fig. \ref{fig:PE-param}(c) shows that when $\rho$ is small, the hinge loss cannot prevent the overfitting, which makes sense. When it increases to a sufficiently large number, the performance will be improved significantly and stably. All the values here for $\rho$ make good balance between the LK loss and the hinge losses.

    \item {\em State-of-the-art Comparison:} We list our comparison results in Table \ref{table:COCO}. We observe that our numbers are almost identical to \cite{zhang2023prise} based on the same data, with only marginal differences that cannot be shown in the table given the precision. 
\end{itemize}



\begin{table*}[h]
\centering\small
\caption{Performance comparison on MSCOCO, GoogleEarth, and GoogleMap.}
\vspace{-2mm}
\setlength{\tabcolsep}{1pt}{
\begin{tabular}{cc|ccccccc}
\toprule
Dataset&Method& PE<0.1& PE<0.5& PE<1& PE<3& PE<5& PE<10 & PE<20\\
\hline
\multirow{6}{*}{\rotatebox[origin=c]{90}{MSCOCO}} & SIFT+RANSAC&0.00&4.70&68.32& 84.21& 90.32 &95.26&96.55 \\
& SIFT+MAGSAC \cite{barath2019magsac} &0.00& 3.66&76.27& 93.26& 94.22& 95.32& 97.26\\
& LF-Net \cite{ono2018lf} &5.60&8.62&14.20& 23.00& 78.88 &90.18&95.45 \\
& LocalTrans \cite{shao2021localtrans} &38.24&\textbf{87.25}&96.45& 98.00& 98.72 &99.25&\textbf{100.00}\\
&DHM\cite{detone2016deep}&0.00&0.00&0.87&3.48&15.27&98.22&99.96\\
&MHN\cite{le2020deep}&0.00&4.58&81.99&95.67&96.02&98.45&98.70\\
&CLKN\cite{chang2017clkn}&35.24& 83.25&83.27& 94.26& 95.75& 97.52& 98.46\\
&DeepLK\cite{zhao2021deep}&17.16& 72.25&92.81& 96.76& 97.67& 98.92&99.03\\
&\textbf{PRISE}& \textbf{52.77} $\pm$ 12.45&
83.27 $\pm$ 5.21&
\textbf{97.29} $\pm$ 1.82&\textbf{98.44} $\pm$ 1.06&\textbf{98.76} $\pm$ 0.08& \textbf{99.31} $\pm$ 0.53&
99.33 $\pm$ 1.84\\
\hline
\multirow{6}{*}{\rotatebox[origin=c]{90}{GoogleEarth}}&SIFT+RANSAC&0.18&3.42&8.97&23.09 & 41.32&50.36& 59.88 \\
&SIFT+MAGSAC \cite{barath2019magsac}&0.00& 0.00&1.88& 2.70& 3.25& 10.03& 45.29\\
&DHM\cite{detone2016deep} & 0.00 & 0.02 & 1.46 & 2.65 & 5.57 & 25.54 & 90.32 \\
&MHN\cite{le2020deep} & 0.00 & 3.42 & 4.56 &5.02 & 8.99 & 59.90 & 93.77 \\
&CLKN\cite{chang2017clkn}& \textbf{0.27} &2.88&3.45&4.24 & 4.32 & 8.77& 75.00 \\
&DeepLK\cite{zhao2021deep}& 0.00 & 3.50& 12.01& 70.20& 84.45& 90.57& 95.52\\
&\textbf{PRISE}& 0.24 $\pm$ 1.83 & \textbf{25.44} $\pm$ 1.21& \textbf{53.00} $\pm$ 1.54& \textbf{82.69} $\pm$ 1.07& \textbf{87.16} $\pm$ 1.09& \textbf{90.69} $\pm$ 0.73& \textbf{96.70} $\pm$ 0.54\\
\hline
\multirow{6}{*}{\rotatebox[origin=c]{90}{GoogleMap}}&SIFT+RANSAC&0.00&0.00&0.00&0.00 & 0.00 &2.74& 3.44 \\
&SIFT+MAGSAC \cite{barath2019magsac}&0.00& 0.00&0.00& 0.00& 0.00& 0.15& 2.58\\
&DHM\cite{detone2016deep} & 0.00 & 0.00 & 0.00 & 1.20 & 3.43 & 6.99 & 78.89 \\
&MHN\cite{le2020deep} & 0.00 & 0.34 & 0.45 &0.50 & 3.50 & 35.69 & 93.77 \\
&CLKN\cite{chang2017clkn}&0.00&0.00&0.00&1.57 & 1.88 &8.67& 22.45 \\
&DeepLK\cite{zhao2021deep}& 0.00& 2.25&16.80& 61.33& 73.39& 83.20& 93.80\\
&\textbf{PRISE}& \textbf{17.47} $\pm$ 2.44
&\textbf{48.13}$ \pm$ 12.00& \textbf{56.93} $\pm$ 3.45&\textbf{76.21} $\pm$ 2.43& \textbf{80.04} $\pm$ 5.55& \textbf{86.13} $\pm$ 0.47& \textbf{94.02} $\pm$ 1.66\\
\bottomrule
\end{tabular}}
\vspace{-3mm}
\label{table:COCO}
\end{table*}

\section{Conclusion}
In this paper, we propose a novel Deep Loss Convexification (DLC) method to help neural networks learn convex-like loss landscapes \wrt predictions at test time. Our basic idea is to facilitate the convergence of an iterative algorithm at test time based on such convex-like shapes with a certain guarantee. To this end, we propose a specific training algorithm by introducing star-convexity as the constraint on the loss landscapes into the original learning problem. Equivalently, this leads to three extra hinge losses appended to the original training loss. To solve this problem efficiently and effectively, we further propose a sampling-based algorithm to mitigate the challenges in the optimization with random samples (\ie discretizing the continuous spaces). In this way, we can train the networks with traditional deep learning optimizers. We also theoretically analyze the near-optimality property with star-convexity based DLC. We finally demonstrate the superior performance of DLC in three different tasks of recurrent neural networks, 3D point cloud registration, and multimodel image alignment.

\ifCLASSOPTIONcaptionsoff
  \newpage
\fi


\bibliographystyle{IEEEtran}
\bibliography{egbib}

\vskip -35pt plus -1fil

\begin{IEEEbiography}[{\includegraphics[width=1in,height=1.25in,clip,keepaspectratio]{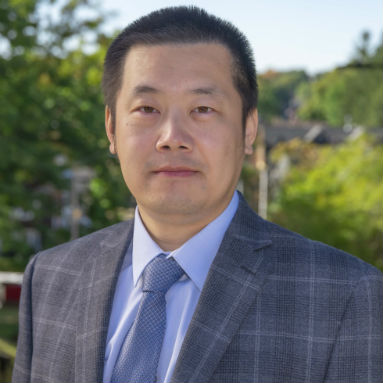}}]{Ziming Zhang} is an assistant professor at Worcester Polytechnic Institute (WPI). Before joining WPI he was a research scientist at Mitsubishi Electric Research Laboratories (MERL) in 2017-2019. Prior to that, he was a research assistant professor at Boston University in 2016-2017. Dr. Zhang received his PhD in 2013 from Oxford Brookes University, UK, under the supervision of Prof. Philip H. S. Torr. His research interests lie in computer vision and machine learning. He won the R\&D 100 Award 2018.
\end{IEEEbiography}

\vskip -30pt plus -1fil

\begin{IEEEbiography}[{\includegraphics[width=1in,height=1.25in,clip,keepaspectratio]{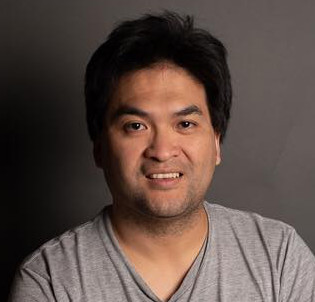}}]{Yuping Shao} received his M.S. in Electrical and Computer Engineering from Worcester Polytechnic Institute, USA in 2023. His research interests lie in 3D point cloud registration and medical imaging applications.

\end{IEEEbiography}

\vskip -50pt plus -1fil

\begin{IEEEbiography}[{\includegraphics[width=1in,height=1.25in,clip,keepaspectratio]{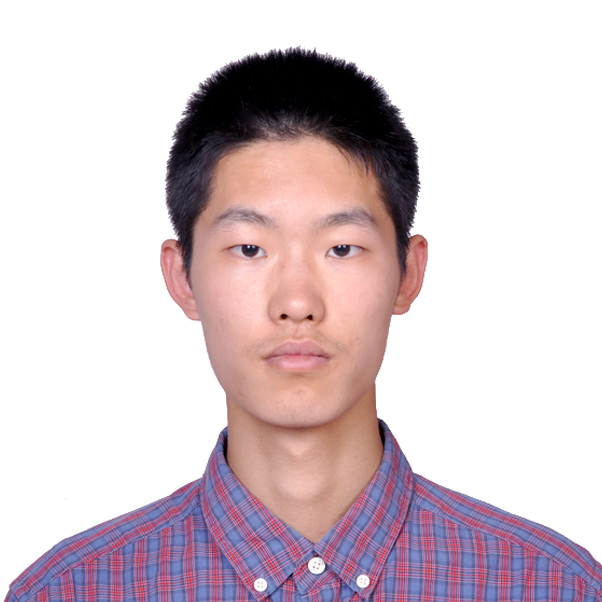}}]{Yiqing Zhang} received his B.S. degree in Statistics from Beijing Technology and Business University, China in 2019 and M.S. degree in Data Science from Worcester Polytechnic Institute, USA in 2021 where he is currently a Ph.D student working on multi-phase learning for clinical trial design. His current research interest is Knowledge Databases (and Graphs) for Drug Discovery.

\end{IEEEbiography}

\vskip -50pt plus -1fil

\begin{IEEEbiography}[{\includegraphics[width=1in,height=1.25in,clip,keepaspectratio]{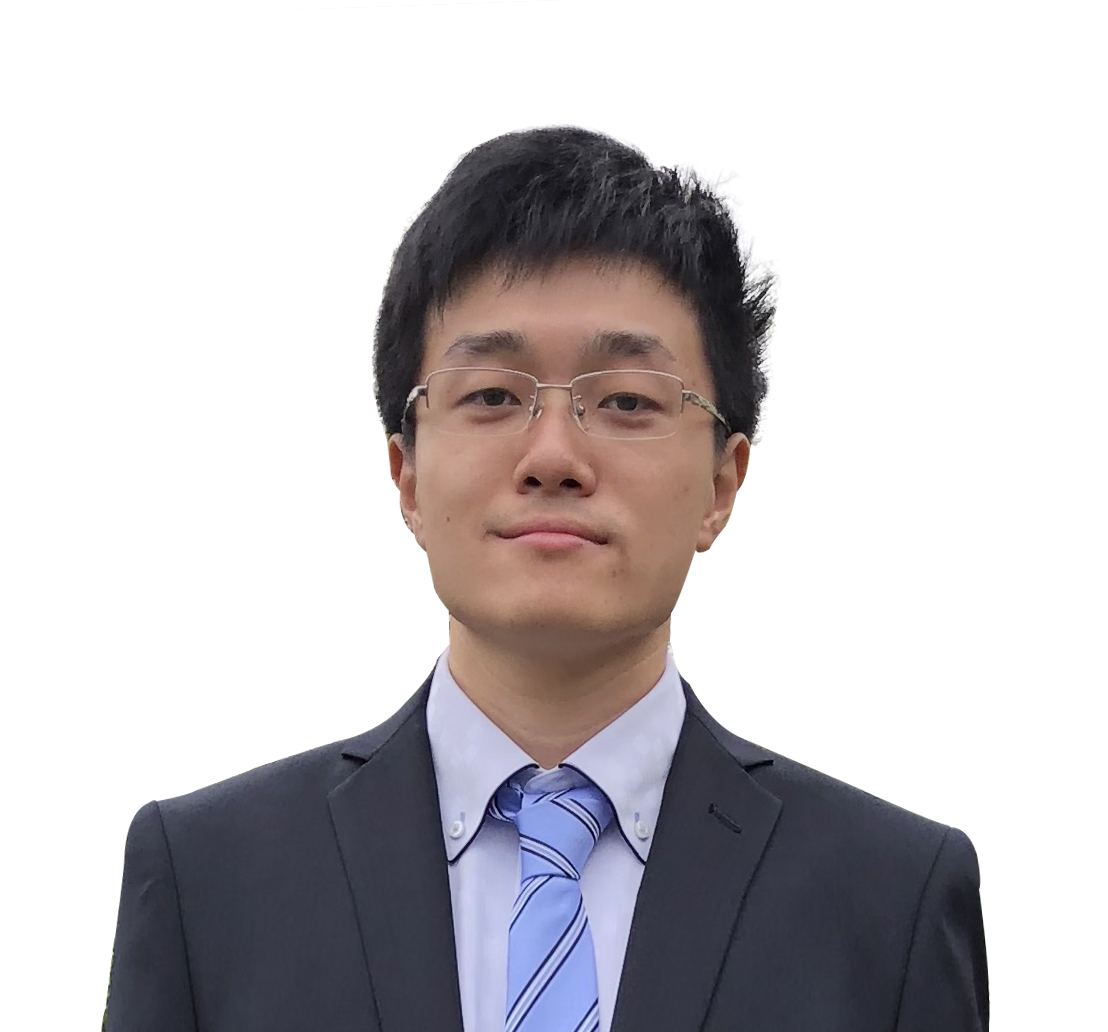}}]{Fangzhou Lin}
received his M.S. and Ph.D. in information sciences and data science from Tohoku University, Japan, Sendai, in 2021 and 2024.  His research interests include 3D point cloud and medical imaging applications.

\end{IEEEbiography}

\vskip -50pt plus -1fil

\begin{IEEEbiography}[{\includegraphics[width=1in,height=1.25in,clip,keepaspectratio]{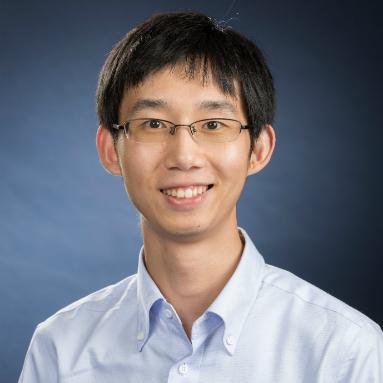}}]{Haichong Zhang} is an Associate  Professor at Worcester Polytechnic Institute (WPI). He is the founding director of the Medical Frontier Ultrasound Imaging and Robotic Instrumentation (FUSION) Laboratory. The research in his lab focuses on the interface of medical imaging, sensing, and robotics, developing robotic-assisted imaging systems as well as image-guided robotic interventional platforms. Dr. Zhang received his B.S. and M.S. in Human Health Sciences from Kyoto University, Japan, and subsequently earned his M.S. and Ph.D. in Computer Science from Johns Hopkins University.

\end{IEEEbiography}

\vskip -40pt plus -1fil

\begin{IEEEbiography}[{\includegraphics[width=1in,height=1.25in,clip,keepaspectratio]{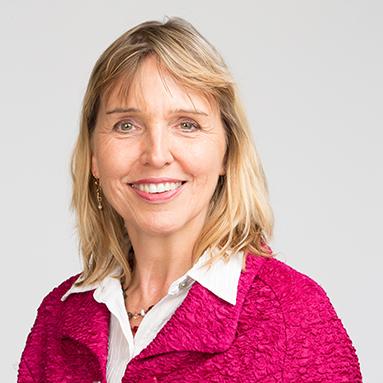}}]{Elke Rundensteiner} is the founding director of the Interdisciplinary Data Science Program, WPI and professor in computer science. Her research, focused on big data management, machine learning, and visual analytics, has been funded by agencies from NSF, NIH, DOE, FDA, to DARPA and resulted in more than 400 publications, patents, and numerous honors and awards. She holds leadership positions in the Big Data field.

\end{IEEEbiography}

\end{document}